\definecolor{darkblue}{rgb}{0, 0, 0.5}
\renewcommand*{\backref}[1]{}
\renewcommand*{\backrefalt}[4]{%
  \ifcase #1%
    \or Cited on p.~#2.%
  \else Cited on pp.~#2%
  \fi%
}
\newcommand{\MI}{\ensuremath{\mathsf{MI}}\xspace}
\newcommand{\AI}{\ensuremath{\mathsf{AI}}\xspace}
\newcommand{\DPD}{\ensuremath{\mathsf{DPD}}\xspace}
\newcommand{\RC}{\ensuremath{\mathsf{RC}}\xspace}
\newcommand{\PI}{\ensuremath{\mathsf{PI}}\xspace}
\newcommand{\MM}{\ensuremath{\mathsf{MM}}\xspace}
\newcommand{\SMI}{\ensuremath{\mathsf{SMI}}\xspace}
\newcommand{\Inv}{\ensuremath{\mathsf{Inv}}\xspace}
\newcommand{\A}{\ensuremath{\mathcal{A}}\xspace}
\newcommand{\B}{\ensuremath{\mathcal{B}}\xspace}
\newcommand{\Oracle}{\ensuremath{\mathcal{O}}\xspace}
\newcommand{\Train}{\ensuremath{\mathcal{T}}\xspace}
\newcommand{\D}{\ensuremath{\mathcal{D}}\xspace}
\newcommand{\DD}{\boldsymbol{\mathcal{D}}}
\newcommand{\gzero}{\mathcal{G}_0}
\newcommand{\gone}{\mathcal{G}_1}
\newcommand{\bit}{\ensuremath{\{0, 1\}}}
\newcommand{\supp}{\operatorname{supp}}
\newcommand{\nomark}{--}
\newcommand{\xmark}{$\times$}
\newcommand{\boldparagraph}[1]{\noindent\textbf{#1}}
\newcommand{\iid}{i.i.d.\@\xspace}
\newcommand{\eg}{e.g.\@\xspace}
\newcommand{\ie}{i.e.\@\xspace}
\newcommand{\wrt}{w.r.t.\@\xspace}
\newcommand{\Adv}{\mathsf{Adv}}
\newcommand{\reducesTo}[1]{\preceq_{#1}}
\newcommand{\nreducesTo}{\not\preceq}
\newcommand{\Prob}[2]{\Pr\!\left[#1\! :\! #2 \right]}
\newcommand{\guess}[1]{\tilde{#1}}
\newcommand{\Guess}[1]{\widetilde{#1}}
\newcommand\ds{\raisebox{-1pt}{\tikz \draw [line cap=round, line width=0.25ex, dash pattern=on 0pt off 2pt] (0,0) circle [radius=0.75ex];}}
\newcommand{\boxA}[1]{%
    \colorlet{currentcolor}{.}%
    {\color{black}%
    \fbox{\color{currentcolor}#1}}%
}
\newcommand{\boxB}[1]{%
    \colorlet{currentcolor}{.}%
    {\color{red}%
    \setlength{\dashlength}{2pt}\setlength{\dashdash}{1pt}%
    \dbox{\color{currentcolor}#1}}%
}
\newcommand{\boxC}[1]{%
    \colorlet{currentcolor}{.}%
    {\color{blue}%
    \setlength{\dashlength}{6pt}\setlength{\dashdash}{3pt}%
    \dbox{\color{currentcolor}#1}}%
}
\newenvironment{game}[1][htb]
  {
   \begin{algorithm2e}[#1]%
    \DontPrintSemicolon
  }{\end{algorithm2e}
  }
\newenvironment{adversary}[1][htb]
  {
    \begin{algorithm2e}[#1]%
    \DontPrintSemicolon
  }{\end{algorithm2e}
  }
\newtheorem{definition}{Definition}
\newtheorem{proposition}{Proposition}
\begin{document}

\title{SoK: Let the Privacy Games Begin!\\
A Unified Treatment of Data Inference Privacy in Machine Learning}

\author{
\IEEEauthorblockN{%
Ahmed Salem\IEEEauthorrefmark{1}\IEEEauthorrefmark{3},
Giovanni Cherubin\IEEEauthorrefmark{1},
David Evans\IEEEauthorrefmark{2},
Boris Köpf\IEEEauthorrefmark{1}\\
Andrew Paverd\IEEEauthorrefmark{1},
Anshuman Suri\IEEEauthorrefmark{2},
Shruti Tople\IEEEauthorrefmark{1},
Santiago Zanella-Béguelin\IEEEauthorrefmark{1}\IEEEauthorrefmark{3}}
\IEEEauthorblockA{%
\IEEEauthorrefmark{1}Microsoft\\
\{t-salem.ahmed, giovanni.cherubin, boris.koepf, andrew.paverd, shruti.tople, santiago\}@microsoft.com
}
\IEEEauthorblockA{%
\IEEEauthorrefmark{2}University of Virginia\\
\{evans, as9rw\}@virginia.edu
}
}

\maketitle

\begin{NoHyper}
\def\thefootnote{\IEEEauthorrefmark{3} Corresponding author}\footnotetext{}\def\thefootnote{\arabic{footnote}}
\end{NoHyper}

\begin{abstract}
Deploying machine learning models in production may allow adversaries to infer sensitive information about training data.
There is a vast literature analyzing different types of inference risks, ranging from membership inference to reconstruction attacks.
Inspired by the success of games (\ie probabilistic experiments) to study security properties in cryptography, some authors describe privacy inference risks in machine learning using a similar game-based style.
However, adversary capabilities and goals are often stated in subtly different ways from one presentation to the other, which makes it hard to relate and compose results.
In this paper, we present a game-based framework to systematize the body of knowledge on privacy inference risks in machine learning.
We use this framework to
\begin{inparaenum}[(1)]
  \item provide a unifying structure for definitions of inference risks,
  \item formally establish known relations among definitions, and
  \item to uncover hitherto unknown relations that would have been difficult to spot otherwise.
\end{inparaenum}
\end{abstract}

\begin{IEEEkeywords}
  privacy, machine learning, differential privacy, membership inference, attribute inference, property inference
\end{IEEEkeywords}

\section{Introduction}
\label{sec:introduction}
\begin{figure*}[t]
\centering
\vspace{-15mm}
\newcommand{\myangle}{30}
\newcommand{\mybigangle}{75}

\begin{tikzpicture}[node distance={30mm}, thick, main/.style = {draw, circle}]
\node[main] (MI) {MI};
\node[main] (RC) [above right of=MI] {RC};
\node[main] (DPD) [below right of=MI] {DPD};
\node[main] (PI) [above left of=MI] {PI};
\node[main] (AI) [below left of=MI] {AI};

\draw[->] (DPD) edge[bend left=\myangle] node[fill=white, anchor=center, pos=0.5] {\hyperref[thmt@@DPDtoMI]{\ref{thmt@@DPDtoMI}}} (MI);
\draw[->] (MI) edge[bend left=\myangle] node[fill=white, anchor=center, pos=0.6] {\hyperref[thmt@@MItoAI]{\ref{thmt@@MItoAI}}} (AI);
\draw[->] (RC) edge[bend right=\myangle] node[fill=white, anchor=center, pos=0.5] {\hyperref[thmt@@RCtoMI]{\ref{thmt@@RCtoMI}}} (MI);
\draw[->] (DPD) edge[bend right=\myangle] node[fill=white, anchor=center, pos=0.5] {\hyperref[thmt@@DPDtoRC]{\ref{thmt@@DPDtoRC}}} (RC);
\draw[->] (AI) edge[bend left=\myangle] node[fill=white, anchor=center, pos=0.5] {\hyperref[thmt@@AItoMI]{\ref{thmt@@AItoMI}}} (MI);

\draw[->] (PI) edge[bend left=\myangle] node[strike out, draw, -, pos=0.6]{} (MI);
\draw[] (PI) edge[bend left=\myangle] node[fill=white, anchor=center, pos=0.3] {\hyperref[thmt@@PItoMI]{\ref{thmt@@PItoMI}}} (MI);

\draw[->] (MI) edge[bend left=\myangle] node[strike out, draw, -]{} (DPD);
\draw[] (MI) edge[bend left=\myangle] node[fill=white, anchor=center, pos=0.3] {\hyperref[thmt@@MItoDPD]{\ref{thmt@@MItoDPD}}} (DPD);

\draw[->] (MI) edge[bend right=\myangle] node[rotate=90, strike out, draw, -]{} (RC);
\draw[] (RC) edge[bend left=\myangle] node[fill=white, anchor=center, pos=0.3] {\hyperref[thmt@@MItoRC]{\ref{thmt@@MItoRC}}} (MI);

\draw[->] (MI) edge[bend left=\myangle] node[strike out, draw, -]{} (PI);
\draw[] (MI) edge[bend left=\myangle] node[fill=white, anchor=center, pos=0.3] {\hyperref[thmt@@MItoPI]{\ref{thmt@@MItoPI}}} (PI);

\draw[->] (RC) edge[] node[strike out, draw, -]{} (DPD);
\draw[] (RC) edge[] node[fill=white, anchor=center, pos=0.7] {\hyperref[thmt@@RCtoDPD]{\ref{thmt@@RCtoDPD}}} (DPD);

\draw[->] (DPD) edge[bend left=\mybigangle, looseness=2.5] node[strike out, draw, -]{} (PI);
\draw[] (DPD) edge[bend left=\mybigangle, looseness=2.5] node[fill=white, anchor=center, pos=0.6]{\hyperref[thmt@@DPDtoPI]{\ref{thmt@@DPDtoPI}}} (PI);

\draw[->] (AI) edge[red, dashed] node[strike out, draw, -]{} (PI);
\draw[->] (AI) edge[bend right=\myangle, red, dashed] node[strike out, draw, -]{} (DPD);
\draw[->] (PI) edge[bend right=\myangle, red, dashed] node[strike out, draw, -]{} (AI);
\draw[->] (PI) edge[red, dashed] node[strike out, draw, -]{} (RC);
\draw[->] (RC) edge[bend right=\myangle, red, dashed] node[strike out, draw, -] {} (PI);
\draw[->] (PI) edge[bend left=\mybigangle, looseness=2.5, red, dashed] node[strike out, draw, -]{} (DPD);
\draw[->] (AI) edge[bend left=\mybigangle, looseness=2.5, red, dashed] node[rotate=90,strike out, draw, -]{} (RC);

\draw[->] (RC) edge[bend left=\mybigangle, looseness=2.5, red, dashed] (AI);
\draw[->] (DPD) edge[red, dashed] (AI);

\matrix [cells={nodes={anchor=west}}] at ($(current bounding box.east)+(2cm,0)$) {
  \node {MI}; & \node{Membership Inference};\\
  \node {AI}; & \node{Attribute Inference};\\
  \node {DPD}; & \node{DP Distinguishability};\\
  \node {PI}; & \node{Property Inference};\\
  \node {RC}; & \node{Data Reconstruction};\\
  \draw[-latex](0,0) -- ++ (0.6,0); & \node{Reduction};\\
  \draw[-latex](0,0) -- ++ (0.6,0) node[pos=0.2, strike out, draw, -]{}; & \node{Separation};\\
  \draw[-latex,color=red](0,0) -- ++ (0.6,0) [dashed]; & \node{Implied reduction};\\
  \draw[-latex,color=red](0,0) -- ++ (0.6,0) [dashed] node[pos=0.2, strike out, draw, -]{}; & \node{Implied separation};\\
};
\end{tikzpicture}
\vspace{-15mm}
\caption{
Relations among adversary goals (under selected threat models).
A solid arrow from node $A$ to $B$ means that security against $A$ (\ie a nontrivial advantage bound) implies security against $B$.
A struck-through arrow from $A$ to $B$ means that security against $A$ does not imply in general security against $B$; we show this separation with a construction that is secure against $A$ but completely insecure against $B$.
Dashed arrows are implied by solid arrows.
Labels over solid arrows refer to the theorem showing the relationship.
Some separations stem from differences in adversary capabilities, \eg $\MI \not \rightarrow \RC$.
}
\label{fig:relations_graph}
\end{figure*}
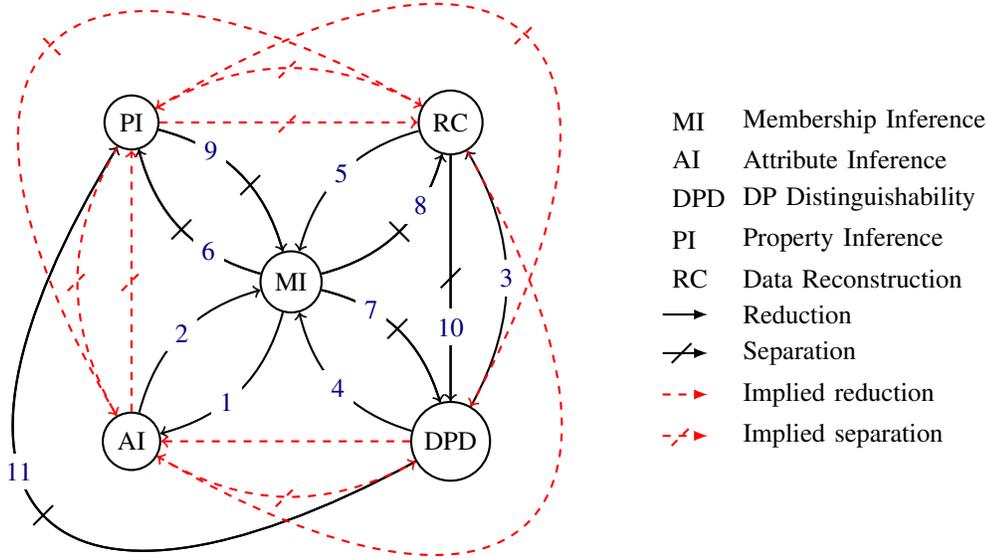

Since the pioneering studies of attribute inference~\cite{Wu:2016,Fredrikson:2015} and membership inference~\cite{Shokri:2017,Li:2013}, research on the inference risks of deploying machine learning (ML) models has bloomed.
There is a growing interest in understanding and mitigating the leakage of information about training data under various threat models that capture different adversarial capabilities (\eg, observing model outputs, model parameters, or transcripts of iterative optimization methods) and goals (\eg, membership inference~\cite{Shokri:2017}, attribute inference~\cite{Wu:2016,Fredrikson:2015}, property inference~\cite{Zhang:2021b,Mahloujifar:2022,Ganju:2018,Suri:2022}, and data reconstruction~\cite{Carlini:2021c,Balle:2022}).

An emerging trend in the literature is to capture threat models using \emph{privacy games}.
This originates from the seminal work of \citet{Wu:2016} on formalizing attribute inference.
A privacy game is a probabilistic experiment where an \emph{adversary} interacts with a \emph{challenger}.
The challenger drives the experiment, invoking the adversary to provide them with information and to allow them to make certain choices, possibly while interacting with oracles controlled by the challenger.
The adversary eventually produces a guess for a confidential value.
This experiment defines a probability space where the success of the adversary can be measured in terms of the probability of their guess being correct.

The use of games for privacy in ML is inspired by the well-established use of games to define and reason about security properties in cryptography.
Cryptographic games are used to standardize and compare security definitions~\cite{Goldwasser:1982,Stern:2003}, and to structure~\cite{Bellare:2006} and even mechanize proofs of security~\cite{Barthe:2009,Blanchet:2012}.
In comparison, the use of privacy games in the ML literature is still in its infancy:
\begin{asparaenum}[(1)]
\item there are no well-established standards for game-based definitions,
\item relationships between different privacy games have only been partially explored, and
\item games are rarely used as an integral part of proofs, despite being especially convenient for this task.
\end{asparaenum}

This has resulted in many game variants in the literature that attempt to formalize the same adversary goal but have subtle yet important differences.
This fragmentation leads to confusion and hinders progress---for membership inference alone, we found variants that differ in details that can change their meaning and substantially alter results.
To address this problem, we present the first systematization of knowledge about privacy inference risks in machine learning, going above and beyond the problem left open since 2016 by \citet{Wu:2016} of merely devising rigorous game-based definitions.
Concretely,
\begin{asparaitem}
\item We break down the \emph{anatomy} of game-based privacy definitions for ML systems into individual components: adversary's capabilities and goals, ways of choosing datasets and challenges, and measures of success (\autoref{sec:anatomy}).

\item Based on this anatomy, we propose a \emph{unified representation} of five fundamental privacy risks as games: membership inference, attribute inference, property inference, differential privacy distinguishability, and data reconstruction (\autoref{sec:formalization}).

\item Using the game-based framework, we \emph{establish and rigorously prove relationships} between the above risks.
Similarly to the study of \emph{concrete security} in cryptography~\cite{Bellare:1997}, we define a quantitative notion of \emph{reduction} between privacy properties.
Using this notion, we prove a set of relations among the above five privacy risks.
This allows us to establish, for every possible ordered pair of risks $A,B$, either a reduction showing that security against $A$ implies security against $B$, or a separation result showing the impossibility of a generic reduction from $A$ to $B$.
\autoref{fig:relations_graph} summarizes the conclusions of this systematization effort for selected games.
\item We present a \emph{case study} (\autoref{sec:case_study}), where we prove that a scenario described as a variant of membership inference in the literature can actually be decomposed into a combination of membership and property inference.
Importantly, in this case we exploit \emph{code-based} reductions, structured as a sequence of games; \ie, our arguments rely on transforming code with a formal semantics.
This way of conducting proofs has seen great success in cryptography.
However, before our work, it had not reached the same level of rigor when reasoning about privacy inference risks in ML.
\end{asparaitem}

\boldparagraph{Scope}
The focus of this SoK is to formalize and systematize game-based definitions that capture the risk of leaking information about the training data of ML models.
We used the following methodology to identify existing game-based definitions from the literature: starting from the seminal works of \citet{Wu:2016} and \citet{Yeom:2020}, we surveyed all peer-reviewed publications in Google Scholar as of August 2022 that cite either of these works. We examined these publications and collected all game-based definitions of attacks that aim to infer information about the training data of ML models.
Our primary objective is to systematize games appearing in the literature.
However, we also demonstrate the versatility of our framework by presenting new game-based definitions of attacks that have not been previously formulated as games.

\boldparagraph{Summary of contributions}
We propose a unifying game-based framework for formalizing privacy inference risks of training data in ML, which we use to systematize definitions from the literature and to establish relations between them.
Our work aims to reduce ambiguity and increase rigor when reasoning and communicating about ML privacy, and gives a solid foundation to future research and decision-making.

\section{Anatomy of a Privacy Game}
\label{sec:anatomy}
Privacy games are parametrized by an adversary (\A) and a training pipeline that specifies the training algorithm (\Train), data distribution (\D), and the size of the training dataset ($n$).
A challenger simulates the ML system. The adversary uses their capabilities---defined by a threat model---to interact with the system and infer information about the training dataset.
\begin{game}
\LinesNumbered
\caption{Membership Inference}
\label{game:MI_example}
  \KwIn{$\A, \Train, n, \D$}
  $S \sim \D^n$		  \tcp*[r]{sample $n$ \iid points from distribution $\D$}
  $b \sim \bit$     \tcp*[r]{flip a fair coin}
  \eIf{$b = 0$}{
	  $z \sim S$  		\tcp*[r]{sample a challenge point uniformly from $S$}
  }
  {
	  $z \sim \D$     \tcp*[r]{sample a challenge point from $\D$}
  }
  $\theta \gets \Train(S)$ \tcp*[r]{train a model $\theta$}
  $\guess{b} \gets \A(\Train, \D, n, \theta, z)$ \tcp*[r]{adversary guesses $b = \guess{b}$}
\end{game}

\autoref{game:MI_example} formalizes the membership inference experiment of \citet{Yeom:2020}, which we use as a running example.
The challenger samples a training dataset $S$ (line 1) and flips a fair coin $b$ (line 2).
Depending on the outcome, they either sample a challenge point $z$ from the training dataset $S$, or from the data distribution \D (lines 3--7).
We discuss alternatives for choosing training datasets and challenges in \autoref{sec:anatomy_choosing}.
The challenger then trains a target model $\theta$ (line 8), and asks the adversary to make a guess $\guess{b}$ for $b$ (line 9).
In this game, the adversary is given the training algorithm (\Train), data distribution (\D), dataset size ($n$), target model ($\theta$), and the challenge point ($z$).
We discuss alternatives for adversary's capabilities in \autoref{sec:anatomy_choosing} and \autoref{sec:anatomy_access}.
The success of the adversary in making a correct guess ($\guess{b} = b$) is measured with respect to the baseline of a random guess. Any advantage over this baseline indicates leakage of membership information.
We discuss other ways to quantify the adversary's success in \autoref{sec:anatomy_evaluating}.

We now discuss in more detail the building blocks of games described above and highlight common choices.

\subsection{Adversary Goals}
\label{sec:anatomy_goals}

We identify five adversary goals from the literature that enable an adversary to directly infer information about the training dataset of an ML model.
We describe these goals informally below and formalize them as games in \autoref{sec:formalization}.

\boldparagraph{Membership Inference (MI)}
The adversary aims to determine whether a specific \emph{record}~\cite{Yeom:2020,Shokri:2017} or \emph{subject}~\cite{Mahloujifar:2021,Suri:2022b} (an entity who may contribute more than one record) was present in the training dataset of the target model.
For example, a successful MI attack against a model trained on clinical records of patients with an infective disease can reveal that a target patient was infected.

\boldparagraph{Attribute Inference (AI)}
The adversary aims to use the model to infer unknown attributes of a record in the training dataset given partial information about the record~\cite{Yeom:2020}.
A successful AI attack can result in the reconstruction of sensitive attributes of a target individual.

\boldparagraph{Property Inference (PI)}
The adversary aims to learn sensitive \emph{statistical} properties of the target model's training distribution.
For example, in a malware classifier, the training dataset may have been generated using a particular testing environment, and it may benefit the adversary to learn certain properties of this environment~\cite{Ganju:2018}.
From an auditing perspective, property inference could be used to assess the training dataset for harms (\eg, under-representation)~\cite{Zhang:2021b}.

\boldparagraph{Differential Privacy Distinguishability (DPD)}
The adversary aims to determine which of a pair of adjacent datasets (\eg differing in the data of one record) of their choosing was used to train the target model.
This goal recasts differential privacy in a game-based setting by making the adversary explicit.
This connection can be used to estimate the differential privacy budget of training pipelines~\cite{Malek:2021,Carlini:2021b,Zanella-Beguelin:2022}.

\boldparagraph{Data Reconstruction (RC)}
The adversary aims to reconstruct samples from the training dataset of a target model~\cite{Carlini:2019,Carlini:2021c,Balle:2022}.
A successful attack can partially reconstruct the training dataset, potentially violating confidentiality requirements.

\boldparagraph{Beyond training data inference}
Other adversary goals, such as model stealing~\cite{Tramer:2016,Orekondy:2019} and hyperparameter stealing~\cite{Wang:2018} are beyond the scope of this SoK because they do not enable the adversary to directly infer information about the training data.
However, the \emph{effects} of these other goals are readily captured by our game-based analysis.
For example, a successful model stealing attack that is used as a precursor to membership inference can be represented by changing the adversary access from black-box to white-box (\autoref{sec:anatomy_access}).

\subsection{Selecting Challenges and Datasets}
\label{sec:anatomy_choosing}

An important aspect of any privacy game is how the challenges and datasets are selected.
In \autoref{game:MI_example}, the challenge point is a single record $z$; in other games, the challenge could comprise multiple points or even a data distribution. For the discussion below, we simplify the language by talking about a single challenge point.
We discuss below three methods commonly used in the literature.

\boldparagraph{Randomly sampled}
The challenge is sampled from a distribution by the challenger as part of the game~\cite{Yeom:2020,Wang:2021,Humphries:2021}.
A randomly sampled challenge provides a measure of \emph{average case} privacy.
While average case privacy measures the risk for average users, the risk for outliers can be significantly higher.

\boldparagraph{Externally provided}
The challenge is provided as a parameter of the game~\cite{Mahloujifar:2021,Humphries:2021}.
This may be used to measure privacy of specific points, \ie, it provides \emph{individual case} privacy.

\boldparagraph{Adversarially chosen}
The challenge is selected by the adversary during the game~\cite{Chang:2021,Malek:2021,Carlini:2021b}.
Since the adversary can select the most advantageous challenge based on the information provided, this provides a measure of \emph{worst case} privacy, \ie, measuring the risks for all users including outliers.
For example, a strong membership inference adversary could choose a challenge that is an outlier \wrt the training data distribution, so that a target classification model is unlikely to classify it correctly unless it is included in the training dataset.
This setting is usually considered when auditing a system to identify risks.

\boldparagraph{Additional considerations}
When the challenge is externally provided or adversarially chosen, the parameters of the game cannot completely determine a correct adversary guess.
Otherwise, security statements that universally quantify over adversaries are void because the quantification includes adversaries with a hardcoded correct guess.
This is similar to the difficulty of defining collision resistance of hash functions~\cite{Rogaway:2006}.

\boldparagraph{Selecting datasets}
The training dataset can also be selected using any of the three options above: it can be randomly sampled by the challenger, externally provided, or (partially) chosen by the adversary.
The latter can be used to represent the case where the model has been trained on (poisoned) data contributed by potentially malicious users~\cite{Mahloujifar:2022,Tramer:2022}.

\subsection{Adversary Access}
\label{sec:anatomy_access}

Depending on the scenario, the adversary may have different levels of access to the target model, training algorithm, training distribution, and training dataset.
This allows the game to capture different threat models, which should ideally match the known or assumed capabilities of real-world adversaries.
Most games assume one of two settings: \emph{black-box} or \emph{white-box} access.

\boldparagraph{Black-box}
In this scenario, the adversary only has query access to the target model (\eg, a cloud-hosted model with an inference API)~\cite{Carlini:2022}.
To formalize this setting, we give the adversary access to the model through an oracle
$\mathbf{Oracle}~\Oracle^\theta(x): \Return~\theta(x)$.
This allows the adversary to query the model $\theta$ on inputs of their choosing and observe the responses, but does not reveal internal workings of the model, such as its architecture or weights.
Depending on the scenario, the oracle can return a confidence for each label, or only the highest-confidence label~\cite{Li:2021,Choquette-Choo:2021}. The latter setting matches inference APIs that do not reveal confidence values, like some email spam classifiers or auto-completion systems.
Additionally, the oracle can be instrumented to post-process responses, or to only emit responses for queries satisfying a (stateful) predicate, \eg, to enforce a bound $N$ on the number of allowed queries the challenge can initialize $q_0 = 0$ and provide
\begin{align*}
\begin{array}{l}
\textbf{Oracle } \Oracle_N^\theta(x) \\
\qquad	q_\theta \gets q_\theta + 1\\
\qquad \textbf{if } q_\theta \leq N \textbf{ then } \Return \arg\max \theta(x) \textbf{ else } \Return~\bot
\end{array}
\end{align*}

\boldparagraph{White-box}
The white-box setting represents the strongest adversary, who has full direct access to the target model \ie, $\A(\theta,\ldots)$.
This obviously provides the adversary with all the capabilities of the black-box setting, but also allows the adversary to inspect the internals of the model including its trained weights~\cite{Sablayrolles:2019,Leino:2020}.
For instance, a model deployed on clients' devices gives white-box access to malicious clients. Alternatively, a successful black-box model stealing attack would enable an adversary to operate in a white-box setting.

\boldparagraph{Grey-box}
In between the black-box and white-box settings, there is a range of \emph{grey-box} threat models in which the adversary has more than black-box but less than full white-box access to the target model.
For example, the adversary could know the architecture of a target model, some of its training hyperparameters, or the public model from which the model has been fine-tuned~\cite{Shokri:2017,Salem:2019}.
Such extra information can be the output of a hyperparameter stealing attack~\cite{Wang:2018}.

\boldparagraph{Auxiliary information}
In addition to having access to the target model, an adversary may have auxiliary information that could be useful for certain attacks.
For example, most MI attacks assume the adversary has access to auxiliary data distributed similarly to the target model's training data, \eg, for building shadow models. This is captured in games by giving the adversary the distribution from which the training data was sampled.

\boldparagraph{Resource constraints}
Most game-based formulations do not explicitly limit the resources available to an adversary, \ie, they consider information-theoretic adversaries.
It could be important to consider resource-limited adversaries that can only issue a specific number of queries to an oracle, or can use a certain amount of memory, or are otherwise computationally bounded.
Intuitively, limiting these resources can reduce the effectiveness of an attack.
These limitations can be specified outside the game as constraints on the adversary, enforced by instrumenting the code of the game (as in Oracle $\Oracle_N^\theta$ above), or incorporated into the measure of success.

\subsection{Measuring Adversary Success}
\label{sec:anatomy_evaluating}

There are various ways of quantifying the adversary's success in games.
We discuss commonly used metrics next.

\subsubsection*{Attack Success Rate}

The \emph{attack success rate} (ASR) measures the expected number of times the adversary succeeds (\ie, wins the game)
over multiple runs.
ASR is arguably the most intuitive and widespread metric for quantifying adversary success;
for example, it matches the attacker's \textit{accuracy} in membership inference.

However, the main drawback of ASR is that it does not take into account the baseline success probability for a given task.
For example, if we evaluate an ML model's resilience to attribute inference, the prior distribution of that attribute will play a role in the adversary's success.
For instance, if the attribute can only take one value, it is trivial for an adversary to achieve 100\% ASR, but this will not be a meaningful measure.
Similarly, the prior probability that an example belongs to the training set affects membership inference accuracy.

Ideally, the metric should quantify the success of an adversary relative to a suitable \emph{baseline}.
The baseline should represent the \emph{a priori} adversary success rate; that is, it should quantify the adversary's success rate if they used only their prior knowledge and had no access to the model.

\subsubsection*{Adversary Advantage}

The notion of \emph{advantage} is a commonly used metric in cryptography, which relates an adversary's success rate to a baseline.
This gives a better intuition of how much an adversary gains by having access to the model (in any of the forms defined in \autoref{sec:anatomy_access}).
In general terms, suppose the adversary is trying to infer some variable $p$; this could be the membership of a data record or the value of a coin toss.
If $\Pr[\A = p]$ is the adversary's success rate (probability to guess $p$ correctly), and $G$ is the baseline success rate, the advantage can be expressed as
$\Adv(\A) = \nicefrac{\Pr[\A = p]-G}{1-G}.$
Assuming $\Pr[\A = p] \geq G$, this metric quantifies the adversary's advantage on a scale of $[0,1]$ relative to the baseline $G$; $0$ represents no advantage over the baseline and $1$ is a perfect attack.
When the secret information $p$ is binary with a uniform prior, $G=\nicefrac{1}{2}$.
This leads to the familiar expression $\Adv(\A) = 2 \Pr[\A = p] - 1.$
Advantage is commonly used as a metric for ML privacy attacks.
For example, \citet{Yeom:2020} define the MI advantage for an adversary $\A$ as follows:
\begin{equation*}
  \Adv_\MI(\A,\Train,n,\D) = 2 \Prob{\MI(\A,\Train,n,\D)}{\guess{b} = b} - 1,
\end{equation*}
where \MI is the membership inference experiment in \autoref{game:MI_example}, and $\Prob{G}{E}$ denotes the probability of event $E$ in the probability space defined by game $G$.

Providing an adequate baseline may be difficult because it may not be possible to accurately model the adversary's knowledge.
This issue can often be bypassed by careful design of the game.
For example, instead of asking the adversary to reconstruct an arbitrary attribute's value, the game can be designed such that the adversary must distinguish between two equally-likely values of the attribute.

\subsubsection*{Beyond advantage}

Average case metrics such as ASR fail to capture inference risks for individuals or subpopulations.
For example, a MI attack against a model may achieve roughly 50\% accuracy (with a 50\% baseline) on average across the population, yet the same attack may perform better when targeting specific individuals or subpopulations~\cite{Chang:2021,Kulynych:2022}.
Having raised similar concerns, \citet{Carlini:2022} suggest that an adversary should be considered successful if it reliably succeeds even on small number of cases. For instance, a MI attack that achieves a high true positive rate (TPR) at some low false positive rate (FPR) could be consequential even if it has low accuracy.

In this paper, we focus on advantage as a metric, since it has the following benefits:
\begin{inparaenum}[(1)]
\item it has an easy interpretation---it represents the gain of an adversary from having access to the system under scrutiny
versus an adversary with only prior knowledge;
\item it is directly related to other metrics, such as ASR (which can be derived directly from it), true and false positive rates (\eg, \cite{Yeom:2020}), and Differential Privacy~\cite{Humphries:2021,Chatzikokolakis:2020};
\item if the attacker's challenge is binary (\eg, distinguishing between members and nonmembers), the advantage computed
when assuming the two choices have a uniform prior gives a bound for any other prior~\cite{Chatzikokolakis:2020}.
Nevertheless, given a game formulation, one can consider other metrics of interest: \eg, area under the ROC curve (AUC-ROC), F1-score, and TPR  at fixed FPR thresholds~\cite{Carlini:2022}.
\end{inparaenum}

\subsection{Consequences of Attacks}

The anatomy we presented can be used to specify threat models and quantify the chances that an adversary successfully achieves their goal. However, the \emph{consequences} of a successful attack depend less on the threat model but rather on the adversary's goal (\autoref{sec:anatomy_goals}) and on the design of the ML system, \eg, the sensitivity of the training data.
For example, the consequences of successful membership inference will be the same irrespective of whether it was performed in a black-box or white-box setting.

\section{Game-based Formalization of Inference Risks}
\label{sec:formalization}
In this section we present privacy games for the five adversary goals introduced in \autoref{sec:anatomy_goals}.
We summarize the notation in \autoref{tab:notation} and the threat models considered in all games in \autoref{tab:AllGames}.

\begin{table}[htb]
\caption{Summary of notation}
\label{tab:notation}
\centering
\begin{tabular}{@{}p{45pt}@{~~}l@{}}
\toprule
\textbf{Notation}              & \textbf{Description} \\
\midrule
$\Train$                       & A stochastic training algorithm \\
$\D$                           & A distribution over examples \\
$\D^n$                         & Distribution of $n$ independent examples from \D \\
$\A$, $\A'$                    & Adversary procedures sharing mutable state \\
$z \sim \D$                    & Draw an example $z$ from \D \\
$S \sim \D^n$                  & Draw $n$ examples $S$ independently from \D \\
$b \sim \bit$                  & Sample a bit $b$ uniformly \\
$b \sim 0 \oplus_p 1$          & Sample 0 with probability $p$, 1 with probability $1-p$\\
$y \gets \mathcal{P}(\vec{x})$ & Call $\mathcal{P}$ with arguments $\vec{x}$ and assign result to $y$ \\
\bottomrule
\end{tabular}
\end{table}

\subsection{Membership Inference}
\label{sec:membership_inference}

Membership inference aims to predict the participation of an entity in the training dataset of the model.
The first (record-level) membership inference attack on supervised learning was proposed by \citet{Shokri:2017} against ML-based classifiers.
Subsequent work has explored membership inference attacks with differing degrees of access to the model (\eg, white-box~\cite{Sablayrolles:2019,Leino:2020} or label-only attacks~\cite{Li:2021,Choquette-Choo:2021}), against different types of models (\eg, generative models~\cite{Chen:2020, Hilprecht:2019, Hayes:2019}, image segmentation~\cite{He:2020}, contrastive learning~\cite{Liu:2021b}, recommender systems~\cite{Zhang:2021}, and Graph Neural Networks (GNN)~\cite{Wu:2021}), and under entirely different threat models~\cite{Shejwalkar:2021,Salem:2019,Hui:2021}.

We present MI variants that have been formalized as games.
We divide the games into two categories depending on whether they focus on a single record (\emph{record-level}) or a \emph{user} represented by a collection of records (\emph{user-level}).

\subsubsection*{Record-level Membership Inference}

The most common interpretation of record-level membership inference is given by the game introduced by \citet{Yeom:2020}, which we presented as \autoref{game:MI_example} in \autoref{sec:anatomy}.
\autoref{game:MI} below presents a semantically equivalent reformulation \MI.
The reader can verify that $b, \theta, z_0$ are distributed identically to $b, \theta, z$ in \autoref{game:MI_example} and thus the joint distribution of $b, \guess{b}$ is the same in both games.
This game considers an adversary with white-box access to the model---they have the model at their disposal and can query it freely, analyze its architecture and parameters, and observe its dynamic behavior.
Since the training dataset and the challenge $z_0$ are sampled from $\D$, this game measures average case MI resilience.

\begin{game}
\caption[F]{\boxA{$\MI^{\phantom{|}}$} \boxB{$\MI^{\textsf{skew}}$} {\boxC{$\MI^\Adv$}}}
\label{game:MI}
\LinesNumbered
	\KwIn{$\Train, \D, n, \boxB{$p$}, \boxC{$\A'$}, \A$}
	$S \sim \D^{n-1}$\;
	$b \sim \boxA{$\bit$} \,\, \boxB{$0 \oplus_p 1$} \,\, \boxC{$\bit$}$\;
	$z_0 \sim \boxA{\D}\,\, \boxB{\D}\,\, \boxC{$\A'(\Train, \D, n)$}$\;
	$z_1 \sim \D$\;
	$\theta \gets \Train(S \cup \{z_b\})$\;
	$\guess{b} \gets \A(\Train, \D, n, \boxB{$p$}, \theta, z_0)$
\end{game}

Several variants of the basic MI game have been considered in the literature; some are semantically equivalent (\eg, \cite{Kulynych:2022,Humphries:2021}) whilst others alter its semantics.
We next systematize these latter variants using the anatomy presented in \autoref{sec:anatomy}.

\citet{Jayaraman:2021} consider game $\MI^\textsf{skew}$ which generalizes \MI by introducing a parameter $p$ representing the prior membership probability (\autoref{game:MI}, line 2). The original \MI game assumes a balanced prior and is recovered as a special case when $p = \nicefrac{1}{2}$.

\citet{Chang:2021} consider game $\MI^\Adv$ in \autoref{game:MI} which strengthens the adversary by allowing them to select the challenge point (line 3).
This game measures worst case MI resilience for an average dataset, \ie, resilience against this variant protects all records---even outliers---against MI.
See \SMI in \autoref{game:DPD} for an even stronger attack where $S$ is adversarially chosen.

\citet{Carlini:2022} consider game $\MI^\textsf{BB}$ which differs in two aspects from \MI.
Firstly, it assumes a black-box adversary who is given only inference access to the model through an oracle, $\textbf{Oracle}~\Oracle^\theta(x): \Return~\theta(x)$ (modifying line 9 in \autoref{game:MI_example}).
This is appropriate when the target model is hosted in the cloud or in a trusted execution environment that ensures its confidentiality.
Secondly, rather than sampling the challenge point from $\D$ when $b = 1$, the challenger samples it from $\D \setminus S$ (modifying line 6 in \autoref{game:MI_example}), thus excluding the case where the challenge happens to be in $S$ by chance.
This is in contrast to game \MI, where nonmembers are sampled from the complete distribution and \textit{may} be contained in $S$.
While doing this seems intuitive, \citet[p.41]{Yeom:2020} note that it is problematic since an adversary could gain advantage not through access to the model but rather by analyzing $\D$ to infer which points are more likely to have been sampled into $S$.
For instance, consider a distribution $\D$ with support $\{x_0,\dotsc,x_m\}$ that assigns probability \nicefrac{1}{2} to $x_0$ and $\nicefrac{1}{2m}$ to each of $x_1,\dotsc,x_m$.
An adversary that ignores $\theta$ and guesses $\guess{b} = 0$ if and only if $z = x_0$ has advantage greater than $\nicefrac{1}{2} - \nicefrac{1}{2^n}$.

\citet{Tramer:2022} introduce a generic privacy game where the goal of the adversary is to guess which point from a universe $\mathcal{U}$ has been included in the training dataset of the target model.
They present variants with ($\MI^\textsf{Pois}$) and without ($\MI^\textsf{Diff}$) poisoning, shown in \autoref{game:MIabs}.
$\MI^{\textsf{Pois}}$ lets the adversary statically poison part of the training dataset (\autoref{sec:anatomy_choosing}).
By considering $\mathcal{U} = \{\hat{z}, \bot\}$, where $\bot$ indicates the absence of an example, the generic game can represent a black-box membership inference attack for a fixed externally provided target example $\hat{z}$.
Compared to variants of membership inference discussed previously, this results in training datasets of different sizes depending on the outcome of sampling the challenge $z$: \eg in $\MI^\textsf{Diff}$ the model may be trained on $S \cup \{\hat{z}\}$ or just on $S$.
This usually does not make a significant difference as training datasets are large and models do not leak the size of their training dataset.
As in $\MI^\textsf{BB}$, values in $S$ are excluded when sampling $z$, which leads to similar problems.

\begin{game}
\caption[F]{$\MI^{\textsf{Diff}}$ \boxA{$\MI^{\textsf{Pois}}$}}
\label{game:MIabs}
	\KwIn{$\Train, \D,\mathcal{U}, n, \A$, \boxA{$\A^\prime, n^\prime$}}
	$S \sim \D^n$\;
	$z \sim \mathcal{U} \setminus S$\;
	\boxA{$S^\prime \gets \A^\prime(\Train, \D, \mathcal{U}, n^\prime)$} \tcp*[r]{$|S^\prime| = n^\prime$}

	$\theta \gets \Train(S \cup \{z\}$ \boxA{$\cup\, S^\prime$}$)$\;
	$\guess{z} \gets \A(\Train, \D, \mathcal{U}, n, \Oracle^\theta(\cdot),$\boxA{$S^\prime$}$)$\;

	\BlankLine

	\textbf{Oracle} $\Oracle^\theta(x)$:
	\Return $\theta(x)$
\end{game}

\boldparagraph{Other variants}
\citet{Humphries:2021} sample the training dataset and challenge point from different distributions (\autoref{game:MM}); we use this variant as the basis for our case study in \autoref{sec:case_study}.
\citet{Tang:2022} define single-query membership inference games where the adversary is only given the output of the trained model on the challenge point, but where the adversary selects the set of examples from where the training dataset is subsampled (see \autoref{sec:SQMI} in the Appendix).
\citet{Gao:2022} consider \emph{deletion inference}, a variant of membership inference in the setting of \emph{machine unlearning}, where
the adversary is given access to a model before and after one of two examples is deleted and is asked to guess which example was deleted.

\subsubsection*{User-level Membership Inference}

Privacy laws such as GDPR require generalizing the goal of MI. Instead of focusing on a single record, the interest is now the complete data of an individual. For instance, an auditor would be interested in learning if a user's data---usually modeled as a collection of records---was used to train a target model. User-level membership inference was introduced to model such scenarios.
\citet{Mahloujifar:2021} formalize user-level MI as in~\autoref{game:MIuser}.
They consider a meta-distribution \D from where $m$ user distributions are sampled. The adversary targets a particular user contributing a dataset $S^*$. This game presents the adversary with a task easier than \autoref{game:MI_example} since they must infer whether an entire group of records is within the training dataset, \ie, it measures \emph{group} privacy.

\begin{game}
\caption[F]{$\MI^\mathsf{User}$}
\label{game:MIuser}
	\KwIn{$\Train, \D, n, \A, S^*, m$}
	$b \sim \bit$\;
	$\D_1, \dotsc, \D_{m} \sim \D$\;
	\For{$i = 1, \dotsc, m-1$}{
		$S_i \gets \D^n_i$\;
	}
	\eIf{$b = 0$}{
		$S_m = S^*$
	} {
		$S_m\gets \D^n_m$
	}
	$\theta \gets \Train(\bigcup_{i=1}^m S_i)$\;
	$\guess{b} \gets \A\left(\Train, \D, n, \Oracle^\theta(\cdot), S^*, m\right)$\;

	\BlankLine

	\textbf{Oracle} $\Oracle^\theta(x)$:
	\Return $\theta(x)$
\end{game}

\subsection{Attribute Inference}
\label{sec:AI}

In attribute inference (AI) attacks, the adversary aims to infer a sensitive attribute of a target record.
\citet{Wu:2016} were the first to formalize AI, confusingly under the name of \emph{model inversion}.
We follow here the more general formalization given by \citet{Yeom:2020} shown in \autoref{game:AI}. 
Recently the scope of AI expanded to other settings~\cite{Zhang:2022,Jayaraman:2022}.

\begin{game}
\caption[F]{\boxA{$\AI$} \boxB{$\Inv$}}
\label{game:AI}
\KwIn{$\Train, \D, n, \A, \varphi, \pi$}
	$S \sim \D^n$\;
	$b\sim \bit $\;
	\eIf{$b = 0$}{
		$z \sim$ \boxA{$S$} \boxB{$\D$}
	}
	{
		$z \sim \D$
	}
	$\theta \gets \Train(S)$\;
	$\guess{a} \gets \A(\Train, \D, n, \theta, \varphi(z))$
\end{game}

In the $\AI$ game, $\varphi(z)$ denotes the adversary's knowledge about the challenge $z$, and $\pi$ a function that extracts the information targeted by the attack, \eg, if $t$ represents the target sensitive attributes, then $\pi(z) = t$.
The experiment is similar to the basic membership inference experiment (\autoref{game:MI_example}) except for the information that the adversary is given and the winning condition.
The adversary is given $\varphi(z)$ and aims to infer $\pi(z)$.
The adversary wins if it correctly predicts these attributes, \ie, $\guess{a} = \pi(z)$.
Training data poisoning can be considered by including adversarially chosen data when training the target model as done for \MI in \autoref{game:MIabs} ($\MI^{\textsf{Pois}}$), an instance of the generic game of \citet{Tramer:2022}.

\boldparagraph{Model inversion}
Another adversary goal with a similar aim to \AI is model inversion~\cite{Wang:2021}.
Model inversion attacks were introduced by \citet{Fredrikson:2015} and subsequently formalized by \citet{Wang:2021} (\Inv in \autoref{game:AI}).
The difference between attribute inference and model inversion according to \citet{Wang:2021} is in how the challenge is sampled: in \AI it is sampled from the training dataset, while in \Inv it is sampled from the distribution \D.
While AI measures privacy risk for members of a model's training dataset, model inversion measures the privacy loss of publishing the model for members of the underlying population.
Whether this is considered a privacy risk is up to debate: a successful attack may lead to the adversary learning information from records that are not part of the training dataset or that do not even exist.
Model owners concerned only with the privacy of the training dataset would use the \AI game, whilst those concerned about population privacy would prefer \Inv.

\subsection{Reconstruction}
\label{sec:reconstruction}

Reconstruction attacks aim to recover entire examples in the training dataset of a model.
Reconstruction has been studied in various settings, including Graph Neural Networks~\cite{Zhang:2022}, image classification~\cite{Salem:2020}, and text generation~\cite{Carlini:2019,Zanella-Beguelin:2020,Carlini:2021c}.
A distilled scenario, where the adversary learns the training data of the target model except for a target example was first formalized by \citet{Balle:2022} as experiment \RC in \autoref{game:RC}.

\begin{game}
\caption[F]{\boxA{\RC} \boxB{$\RC^\textsf{Ran}$}}
\label{game:RC}
\KwIn{\boxA{$S$}, \boxB{$\D, n$}, $\pi, \Train, \A$}
	\boxB{$S \sim \D^{n-1}$}\;
	$z \sim \pi$\;
    $\theta \gets \Train(S \cup \{z\})$\;
    $\guess{z} \gets \A(\Train, \theta, \boxB{$\D, n$}, S)$
\end{game}

Reconstruction robustness is parametrized by bounds on the error and success probability and defined as follows.

\begin{definition}[\citet{Balle:2022}, Definition 2]
A training pipeline is $(\eta, \gamma)$-reconstruction robust with respect to a prior $\pi$ and reconstruction loss $\ell$ if for any dataset $S$ and any reconstruction adversary $\A$,
\begin{equation*}
\Prob{\RC}{\ell(z, \guess{z}) \leq \eta} \leq \gamma
\end{equation*}
\end{definition}

The adversary is given the model $\theta$, training algorithm $\Train$, and the training dataset $S$ except for one point $z$ which they need to reconstruct.
Game $\RC^\textsf{Ran}$ models how other points in the training dataset are sampled, instead of considering a fixed dataset $S$.
The advantage of an adversary \A against \RC \wrt a baseline that ignores $\theta$ and just samples $\guess{z}$ from \D is
\begin{equation*}
  \Adv_{\RC}(\A) =
    \Prob{\RC}{\guess{z} = z} -
    \Prob{z,\guess{z} \sim \D}{\guess{z} = z}
\end{equation*}

Alternatively, one can consider the baseline success of an adversary that picks $\guess{z}$ according to $\pi$,
\begin{equation}
\label{eq:RC_baseline}
    \sup_{\guess{z} \in \supp(\pi)} \Prob{z \sim \pi}{\ell(z,\guess{z}) \leq \eta}
\end{equation}
Both games can be adapted to consider a poisoning-capable adversary as demonstrated in \autoref{game:MIabs}.

\boldparagraph{Reconstruction in language models}
Recent work focused on large language models and evaluated reconstruction attacks against them.
Attacks can be categorized as untargeted~\cite{Carlini:2021c} or targeted~\cite{Carlini:2019}.
Untargeted attacks aim to reconstruct \textit{any} training data from the generative model, whilst targeted attacks aim to reconstruct \textit{specific} training data records, which may have been inserted as canaries during training.
To demonstrate the flexibility of privacy games, we formalize an example from each category, as shown in \autoref{game:RC_targ}.

We formalize a black-box untargeted data reconstruction attack by \citet{Carlini:2021c} tailored to large generative language models as $\RC^\textsf{Untarg}$.
The authors measure the success of an attack by its true positive rate or recall, that is, the fraction of examples in $\Guess{S}$ that are in the training dataset $S$.

We formalize a black-box targeted reconstruction attack by \citet{Carlini:2019} as $\RC^\textsf{Targ}$.
The authors insert a \emph{canary} multiple times into the training data as a way to measure unintended memorization in generative models.
Canaries are specified by a format sequence $s[\cdot]$ that fixes some tokens and leaves \emph{holes} to be filled with secrets sampled from a randomness space $\mathcal{R}$.
For example, $s = $ ``the PIN is \ds\,\ds\,\ds\,\ds'' with $\mathcal{R}$ being the space of 4-digit decimal numbers.
\citet{Carlini:2019} measure the success of targeted canary reconstruction as the reduction in the guessing entropy of secrets in canaries given the model.

\begin{game}
\caption[F]{$\RC^\textsf{Untarg}$ \boxA{$\RC^\textsf{Targ}$}}
\label{game:RC_targ}
\KwIn{$\Train, \D, n, \A$, \boxA{$\mathcal{R}, s, m$}}
$S \sim \D^n$\;
\boxA{$r \sim \mathcal{R}$}\;
$\theta \gets \Train(S $ \boxA{$\cup \{ s[r] \}^m$} $)$\;
$\Guess{S} \gets \A(\Train, \D, n, \Oracle^\theta(\cdot), $ \boxA{$\mathcal{R}, s$}$)$\;

\BlankLine

\textbf{Oracle} $\Oracle^\theta$(x):
\Return $\theta(x)$
\end{game}

\boldparagraph{Selecting a game}
Game \RC is appropriate when evaluating the worst case risk of reconstructing an example in the training dataset. It conservatively considers an informed adversary that knows all examples but the target, and incorporates the adversary's background knowledge in a prior.
Game $\RC^\textsf{Ran}$ considers an equally informed adversary, but averages the reconstruction risk over the choice of other training examples.
Game $\RC^\textsf{Untarg}$ represents a more realistic threat model and should be chosen when evaluating the risk of indiscriminately reconstructing training data, while $\RC^\textsf{Targ}$ is appropriate for auditing the risk of extracting data following certain patterns.

\boldparagraph{Other variants}
Similar to MI, reconstruction attacks have been adapted to the machine unlearning setting.
\citet{Gao:2022} consider \emph{deletion reconstruction}, where an adversary is given access to a model before and after a random training example is deleted and is asked to reconstruct it.

\subsection{Distribution Inference}

Distribution inference attacks do not focus on specific data records, but instead aim at inferring properties about the training data distribution.
We next describe two variants of distribution inference.
The first is property inference, \eg, where the adversary is interested in learning about the prevalence of specific sensitive attributes in the training data, such as sex or ethnicity.
The second is subject-level distribution inference, where the training data is sampled from a mixture of distributions, each corresponding to a subject that may participate in training. The adversary's goal is to infer whether a subject has participated knowing the subject's data distribution rather than concrete samples like in game $\MI^\textsf{User}$ in \autoref{game:MIuser}.

\subsubsection*{Property Inference}

Property inference attacks were first proposed by \citet{Ganju:2018} in the white-box setting and by \citet{Zhang:2021b} in the black-box setting.
\citet{Zhou:2022} showed them to be effective against generative models and GANs specifically.
\citet{Suri:2022} formalized property inference attacks as \PI in \autoref{game:PI}, parametrized by two functions $\gzero,\gone$ that transform an underlying distribution.

\begin{game}
\caption[F]{\boxA{$\PI^{\phantom{|}}$} \boxB{$\PI^\textsf{Gen}$}}
\label{game:PI}
\KwIn{$\boxA{$\D, \gzero, \gone$}\,\, \boxB{$\D_0, \D_1$}, n, \Train, \A$}
	$b\sim \bit $\;
	$S \sim $\boxA{$\mathcal{G}_b(\D)^n$} \boxB{$\D_b^n$}\;
	$\theta \gets \Train(S)$\;
	$\guess{b} \gets \A(\Train, \boxA{$\D, \gzero, \gone$}\,\, \boxB{$\D_0, \D_1$}, n, \theta)$
\end{game}

$\PI^\textsf{Gen}$ is an equivalent formulation parametrized by two distributions corresponding to the application of $\gzero$, $\gone$ to the base distribution \D in \PI.
\citet{Hartmann:2023} generalize this to more than two distributions.

Similarly to \MI and \AI, poisoning can be modelled as in \autoref{game:MIabs} by letting the adversary choose part of the training dataset of the target model.
\citet{Mahloujifar:2022} and \citet{Chaudhari:2022} show that poisoning increases inference risk by injecting data to maximize leakage of properties of the training dataset. For instance, in multi-party learning, a malicious participant may contribute poisoned data crafted to amplify property leakage of data from other participants.

\subsubsection*{Subject-level Distribution Inference}

Subject-level distribution inference broadens the scope of user-level membership inference by not assuming access to the user's exact data that may have been used to train a model.
Instead, it only requires the adversary know the distribution from which the target user's data is sampled.
\citet{Suri:2022b} present subject membership inference as a special case of distribution inference.
We similarly formalize subject-level inference in \autoref{game:MIsubj}.

\begin{game}
\LinesNumbered
\caption[F]{$\MI^\textsf{Subj}$}
\label{game:MIsubj}
\KwIn{$\Train, \D, \D_*, n, m, \A$}
	$b \sim \bit$\;
	$\D_1, \dotsc, \D_m \sim \D$\;
	\For{$i = 1, \dotsc, m-1$}{
		$S_i \sim {\D}_i^n$
	}
	\eIf{$b = 0$}
	{
		$S_m \sim \D_*^n$
	}
	{
		$S_m \sim \D_m^n$
	}
	$\theta \gets \Train(\bigcup_{i=1}^m S_i)$\;
	$\guess{b} \gets \A\left(\Train, \D, \D_*, n, m, \theta\right)$
\end{game}

The training data distribution is structured as a mixture of distributions corresponding to a set of subjects.
This is a property inference attack because the adversary seeks to infer which of two distributions the training data is sampled from.
However, conceptually, the adversary's goal is to infer membership of a subject's data since the only difference between the two distributions is the presence of the target subject in the mixture.

A successful subject-level distribution inference attack can identify if a user's data was used to train the target model without knowing which exact examples were used; \ie, with access to only the user's data distribution and not the sampled dataset as in \autoref{game:MIuser}.

\subsection{Differential Privacy Distinguishability}

Differential Privacy Distinguishability (DPD) formalizes the threat model underlying the definition of DP, where the adversary aims to distinguish between models trained on adjacent datasets.
We formalize as game \DPD in \autoref{game:DPD} the variant corresponding to the \emph{substitute one} adjacency relation, where two datasets are adjacent if one can be obtained from the other by substituting a single record.
The \DPD game represents a worst-case variant of the membership inference game \MI where the training data and challenges are adversarially chosen.

Prior work used DP distinguishing attacks to statistically estimate or audit the privacy of training pipelines~\cite{Carlini:2021b,Jagielski:2020,Zanella-Beguelin:2022,Tramer:2022b}.
\citet{Marathe:2021} define subject-level differential privacy by considering datasets as adjacent when they differ in the data of a user, which can be seen as a counterpart to user-level membership inference.
\citet{Humphries:2021} and \citet{Balle:2022} discuss strong membership inference, a threat model in between DPD and MI.
In this game, formalized as \SMI in \autoref{game:DPD}, the adversary knows but does not choose the two adjacent datasets.
As mentioned in \autoref{sec:anatomy_choosing} this narrows the scope of the measured privacy, \eg, from worst to individual case privacy.

\begin{game}
\caption[F]{\boxA{\DPD} \boxB{\SMI}}
\label{game:DPD}
\label{game:SMI}
\KwIn{$\Train, \A, \boxA{$\A', n$}, $\boxB{$S, z_0, z_1$}}
    \boxB{$S, z_0, z_1 \gets \A'(\Train, n)$}  \tcp*[r]{$|S| = n-1$}
    $b\sim \bit$\;
    $\theta \gets \Train(S \cup \{z_b\})$\;
    $\guess{b} \gets \A(\Train, \theta, S, z_0, z_1)$
\end{game}

\begin{table*}[!t]
\centering
\caption{An overview of different games and features of their corresponding threat models.\\
\rm
  \checkmark\ indicates the game has this feature,
  \nomark\ indicates the game does not have this feature,
  \xmark\ indicates that the feature is not applicable.}
\label{tab:AllGames}
\rowcolors{2}{white}{gray!25}
\begin{tabular}{l@{~~~}lccccccccccc}
\toprule

\rowcolor{white}
& & \multicolumn{2}{c}{Adversary Access}& \multicolumn{3}{c}{Challenge}& \multicolumn{3}{c}{Training Dataset}& \multicolumn{3}{c}{Adversary Interest}\\

\cmidrule(lr){3-4} \cmidrule(lr){5-7} \cmidrule(lr){8-10} \cmidrule(lr){11-13}

\rowcolor{white}
Game & Definition & Black-box & White-box & Rand & Adv & Param & Rand & Adv & Param & Record & Subject & Distribution \\

\midrule

\rowcolor{white}
\multicolumn{13}{c}{\textbf{Membership Inference}} \\

$\MI$                 & \autoref{game:MI}      \cite{Yeom:2020,Kulynych:2022,Humphries:2021} & \nomark & \checkmark & \checkmark & \nomark & \nomark & \checkmark & \nomark & \nomark & \checkmark & \nomark & \nomark \\
$\MI^\textsf{skew}$   & \autoref{game:MI}      \cite{Jayaraman:2021} & \nomark & \checkmark & \checkmark & \nomark & \nomark & \checkmark & \nomark & \nomark & \checkmark & \nomark & \nomark \\
$\MI^\textsf{BB}$     & \autoref{game:MI}      \cite{Carlini:2022} & \checkmark & \nomark & \checkmark & \nomark & \nomark & \checkmark & \nomark & \nomark & \checkmark & \nomark & \nomark \\
$\MI^{\Adv}$          & \autoref{game:MI}      \cite{Chang:2021} & \nomark & \checkmark & \nomark & \checkmark & \nomark & \checkmark & \nomark & \nomark & \checkmark & \nomark & \nomark \\
$\MI^\textsf{Diff}$   & \autoref{game:MIabs}   \cite{Tramer:2022} & \checkmark & \nomark & \checkmark & \nomark & \nomark & \checkmark & \nomark & \nomark & \checkmark & \nomark & \nomark \\
$\MI^\textsf{Pois}$   & \autoref{game:MIabs}   \cite{Tramer:2022} & \checkmark & \nomark & \checkmark & \nomark & \nomark & \checkmark & \checkmark & \nomark & \checkmark & \nomark & \nomark \\
$\MI^\textsf{User}$   & \autoref{game:MIuser}  \cite{Mahloujifar:2021} & \checkmark & \nomark & \nomark & \nomark & \checkmark & \checkmark& \nomark & \nomark & \nomark & \checkmark & \nomark \\
$\MM$                 & \autoref{game:MM}      \cite{Humphries:2021} & \nomark & \checkmark & \checkmark & \nomark & \nomark & \checkmark & \nomark & \nomark & \checkmark & \nomark & \checkmark \\
$\MI^\textsf{SQ}$     & \autoref{game:SQMI}    \cite{Tang:2022} & \checkmark & \nomark & \checkmark & \nomark & \nomark & \nomark & \checkmark & \nomark & \checkmark & \nomark & \nomark \\

\rowcolor{white}
\addlinespace
\multicolumn{13}{c}{\textbf{Attribute Inference and Model Inversion}} \\

$\AI$                 & \autoref{game:AI}      \cite{Yeom:2020} & \nomark & \checkmark & \checkmark & \nomark & \nomark & \checkmark & \nomark & \nomark & \checkmark & \nomark & \nomark \\
$\Inv$                & \autoref{game:AI}      \cite{Wang:2021} & \nomark & \checkmark & \checkmark & \nomark & \nomark & \checkmark & \nomark & \nomark & \checkmark & \nomark & \nomark \\

\rowcolor{white}
\addlinespace
\multicolumn{13}{c}{\textbf{Data Reconstruction}} \\

\rowcolor{gray!25}
$\RC$                 & \autoref{game:RC}      \cite{Balle:2022} & \nomark & \checkmark & \checkmark& \nomark & \nomark & \nomark & \nomark & \checkmark & \checkmark & \nomark & \nomark \\
\rowcolor{white}
$\RC^\textsf{Untarg}$ & \autoref{game:RC_targ} \cite{Carlini:2021c} & \checkmark & \nomark & \xmark & \xmark & \xmark & \checkmark& \nomark & \nomark & \checkmark & \nomark & \nomark \\
\rowcolor{gray!25}
$\RC^\textsf{Targ}$   & \autoref{game:RC_targ} \cite{Carlini:2019} & \checkmark & \nomark & \checkmark& \nomark & \nomark & \checkmark & \nomark & \nomark & \checkmark & \nomark & \nomark \\

\rowcolor{white}
\addlinespace
\multicolumn{13}{c}{\textbf{Distribution Inference}} \\

\rowcolor{gray!25}
$\PI$                & \autoref{game:PI}      \cite{Suri:2022} & \nomark & \checkmark & \xmark & \xmark & \xmark & \checkmark & \nomark & \nomark & \nomark & \nomark & \checkmark \\
\rowcolor{white}
$\MI^\textsf{Subj}$  & \autoref{game:MIsubj}  \cite{Suri:2022b} & \nomark & \checkmark & \checkmark & \nomark & \nomark & \checkmark& \nomark & \nomark & \nomark & \checkmark & \checkmark \\

\rowcolor{white}
\addlinespace
\multicolumn{13}{c}{\textbf{Differential Privacy Distinguishability}} \\

$\DPD$               & \autoref{game:DPD}     \cite{Malek:2021,Carlini:2021b}& \nomark & \checkmark & \nomark & \checkmark & \nomark & \nomark & \checkmark & \nomark & \checkmark & \nomark & \nomark \\
$\SMI$               & \autoref{game:SMI}     \cite{Humphries:2021,Balle:2022} & \nomark & \checkmark & \nomark & \nomark & \checkmark & \nomark & \nomark & \checkmark & \checkmark & \nomark & \nomark \\

\bottomrule
\end{tabular}
\end{table*}

\section{Relations and Proofs}
\label{sec:relations}
In this section we establish relationships between privacy games.
To this end, we define a notion of \emph{reduction} and use it to translate attacks and guarantees between the five fundamental games from the previous section, or show that no generic connection can exist.

\subsection{Reductions for Privacy Games}

Inspired by notions of reduction from complexity theory and cryptography~\cite{Arora:2009}, we introduce reductions between privacy games as a means of comparing the various inference risks.
Whilst reductions in cryptography are traditionally based on asymptotic behavior governed by a security parameter, the reductions we define here are closer to those used in \emph{concrete security} proofs, in that the constants underlying the loss incurred in the reduction are made explicit.

\begin{definition}
We say that game $G_1$ is reducible to game $G_2$ if there is a constant $c > 0$ such that, for any adversary \A against $G_2$, there exists an adversary \B against $G_1$ such that
\begin{equation*}
	\Adv_{G_1}(\B) \geq c\cdot \Adv_{G_2}(\A)
\end{equation*}
We denote this using the shorthand $G_1 \reducesTo{c} G_2$ and sometimes drop the constant $c$.
\end{definition}

The intuition behind the shorthand is that game $G_1$ is at most as hard to win as $G_2$---modulo the constant $c$. This intuition holds for $c$ around or larger than 1. For $c \ll 1$, however, the lower bound on $\Adv_{G_1}(\B)$ can get close to $0$, in which case the intuition may be misleading.

\boldparagraph{Resilience to attacks}
Reductions between privacy games imply that attacks against one game translate into attacks against the other.
An equivalent reading is the contrapositive, that resilience against attacks in one game implies resilience against attacks in the other.

\begin{definition}
A game $G$ is \emph{$p$-resilient} if for all adversaries \A against $G$,
\begin{equation*}
	\Adv_{G}(\A) < p
\end{equation*}
\end{definition}

\begin{proposition}
\label{prop:resilience}
If $G_1\reducesTo{c} G_2$ and $G_1$ is $p$-resilient then $G_2$ is $p / c$-resilient.
\end{proposition}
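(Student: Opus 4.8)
The plan is to chase the two hypotheses through in the obvious order: fix an arbitrary adversary against $G_2$, push it through the reduction to get an adversary against $G_1$, bound the latter using $p$-resilience of $G_1$, and then divide by $c$ to transfer the bound back to $G_2$. Since the statement is a direct composition of the definitions of reducibility and resilience, I expect the argument to be a few lines.

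Concretely, first I would let \A be any adversary against $G_2$. By the hypothesis $G_1 \reducesTo{c} G_2$, the definition of reduction supplies a constant $c > 0$ (the same for all adversaries) and an adversary \B against $G_1$ with $\Adv_{G_1}(\B) \geq c \cdot \Adv_{G_2}(\A)$. Next, since $G_1$ is $p$-resilient, the definition gives $\Adv_{G_1}(\B) < p$. Chaining the two inequalities yields $c \cdot \Adv_{G_2}(\A) \leq \Adv_{G_1}(\B) < p$, and dividing by the positive constant $c$ gives $\Adv_{G_2}(\A) < p / c$. As \A was arbitrary, every adversary against $G_2$ has advantage strictly below $p/c$, which is exactly the statement that $G_2$ is $p/c$-resilient.

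There is essentially no obstacle here; the only points worth flagging are bookkeeping. One should note that $c > 0$ is guaranteed by the definition of $\reducesTo{c}$, so the division is legitimate and the direction of the inequality is preserved; and one should observe that the chain of inequalities remains valid even if $\Adv_{G_2}(\A)$ happens to be non-positive, in which case the conclusion $\Adv_{G_2}(\A) < p/c$ holds trivially since $p/c > 0$. The contrapositive reading --- an attack achieving advantage at least $p/c$ against $G_2$ would, via the reduction, yield an attack achieving advantage at least $p$ against $G_1$ --- can be mentioned as intuition but is not needed for the formal argument.
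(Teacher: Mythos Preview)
Your proof is correct and is essentially the same as the paper's, which argues the contrapositive in one line (``if there is an attack on $G_2$ with advantage more than $p/c$, then there is one on $G_1$ with advantage more than $p$''). You unwind the same inequality chain directly rather than by contradiction, and you even note the contrapositive reading yourself; there is no substantive difference.
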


\begin{proof}
By contradiction: If there is an attack on $G_2$ with advantage more than $p/c$, then there is one on $G_1$ with advantage more than $p$.
\end{proof}

Proofs of resilience are rare in the literature.
Prime examples are results that establish upper bounds on the advantage of a DP distinguisher when the model is trained with differential privacy~\cite{Yeom:2020,Humphries:2021}.
The tightest known bound is given in the following proposition.

\begin{proposition}[{\citet[Theorem 3.1]{Humphries:2021}}]
\label{prop:humphries}
Let $\Train$ be an $(\varepsilon,\delta)$-differentially private training algorithm. Then
\begin{equation*}
	\Adv_{\DPD}(\A) \le \frac{e^\varepsilon - 1 + 2\delta}{e^\varepsilon + 1}
\end{equation*}
\end{proposition}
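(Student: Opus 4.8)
The plan is to reduce the statement to a distinguishing problem between two fixed adjacent output distributions, and then solve a small optimization. First I would fix the internal coins of $\A'$, so that the triple $(S, z_0, z_1)$ is determined, and observe that $D_0 := S \cup \{z_0\}$ and $D_1 := S \cup \{z_1\}$ are adjacent (they differ by substituting one record). Fixing also the internal coins of $\A$, the guess $\guess{b}$ becomes a deterministic function of $\theta$, so let $R = \{\theta : \A \text{ outputs } 1\}$ be its acceptance region. A direct computation gives $\Pr[\guess{b} = b] = \tfrac12\bigl(1 - \Pr[\Train(D_0)\in R]\bigr) + \tfrac12\Pr[\Train(D_1)\in R]$, hence, writing $p = \Pr[\Train(D_0)\in R]$ and $q = \Pr[\Train(D_1)\in R]$, the conditional advantage $2\Pr[\guess b = b]-1$ equals $q - p$. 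Since the bound I will derive depends only on $\varepsilon$ and $\delta$ and not on $R$ or on the adjacent pair, averaging over the coins of $\A$ and of $\A'$ (advantage being affine in $\Pr[\guess b = b]$) reduces the claim to bounding $q - p$ for an arbitrary event $R$ and an arbitrary adjacent pair.

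Next I would invoke $(\varepsilon,\delta)$-differential privacy of $\Train$ on the pair $D_0, D_1$, applied to both the event $R$ and its complement $R^c$ and in both orientations of the pair. This yields the linear constraints $q \le e^\varepsilon p + \delta$, $\ p \le e^\varepsilon q + \delta$, $\ 1 - q \le e^\varepsilon(1-p) + \delta$, $\ 1 - p \le e^\varepsilon(1 - q) + \delta$, together with $0 \le p, q \le 1$. From the first constraint, $q - p \le (e^\varepsilon - 1)p + \delta$, which is nondecreasing in $p$; from the last, $q - p \le 1 - p - (1 - p - \delta)/e^\varepsilon$, which is decreasing in $p$. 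The maximum of the pointwise minimum of these two upper bounds is attained where they coincide, at $p = (1-\delta)/(e^\varepsilon + 1)$; substituting back gives $q - p \le \frac{e^\varepsilon - 1 + 2\delta}{e^\varepsilon + 1}$. If $q - p$ is negative the claimed bound holds trivially, so this completes the argument.

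The main obstacle — the point the proof must get right — is that the single ``forward'' inequality $q \le e^\varepsilon p + \delta$ is not enough: on its own it only yields $q - p \le e^\varepsilon - 1 + \delta$ at $p = 1$, which is weaker than the stated bound and in fact vacuous once $e^\varepsilon \ge 2$. One has to also use the DP inequality on the complement event $R^c$ to limit how large $p$ can be while $q$ stays near $1$; this is precisely what pushes the optimum to an interior value of $p$ and produces the tight constant $\frac{e^\varepsilon-1+2\delta}{e^\varepsilon+1}$. A secondary, routine point is the reduction from randomized $\A$ and $\A'$ to the fixed acceptance-region case, which goes through because the derived bound is uniform over $R$ and over the adjacent pair, so it survives averaging.
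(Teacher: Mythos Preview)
The paper does not prove this proposition; it merely states it and attributes it to \citet[Theorem~3.1]{Humphries:2021}. So there is no in-paper proof to compare against.

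Your argument is correct and is essentially the standard derivation of this bound. The reduction to a fixed acceptance region $R$ and fixed adjacent pair $(D_0,D_1)$ via averaging over adversary coins is sound (the bound you obtain is uniform, so it survives expectation), and the optimization is carried out cleanly: you correctly identify that the forward constraint $q \le e^\varepsilon p + \delta$ alone is insufficient and that the complement-event constraint $1-p \le e^\varepsilon(1-q)+\delta$ is needed to pin the optimum at the interior point $p=(1-\delta)/(e^\varepsilon+1)$. The algebra checks out and yields exactly $\frac{e^\varepsilon-1+2\delta}{e^\varepsilon+1}$.
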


Therefore, any game the DP distinguisher inference game can be reduced to (see \autoref{fig:relations_graph} for an overview) inherits the security benefits of training with differential privacy via Propositions~\ref{prop:resilience} and \ref{prop:humphries}.

\boldparagraph{Separation Results}
No reductions exist between several games. For them, we  show separation results of the form $G_1 \nreducesTo{} G_2$.
We establish such results by showing that there is an instance of $G_1$ that is resilient to attacks whereas its $G_2$ counterpart is not, and use \autoref{prop:resilience} to conclude that no reduction exists.

\subsection{Overview of Relations between Games}

\autoref{fig:relations_graph} shows the relations between the five fundamental privacy games.
Each node in the figure and in the following theorems refers to the basic game-based definition of the corresponding inference risk, \ie, \MI, \AI, \RC, \DPD, and \PI.

As expected, \PI is fully disconnected: there exists a separation result between it and every other game.
This can be attributed to the PI adversary's goal of learning properties of the training data distribution rather than about individual records as in the other games.
\RC and \DPD have the strongest threat models, where the adversary controls the entire training dataset except for one example, and hence are unsurprisingly the hardest to reduce from other games.
Finally, \MI and $\AI$ are reducible to each other and their relatively weak threat models make both \RC and \DPD reducible to them.
For this reason, we use the \MI game as the anchor for our proofs.
We next present results for a set of edges (solid lines) in \autoref{fig:relations_graph} that imply all other relations.
We defer the proofs to the Appendix.

\subsection{Reductions}
\label{sec:reductions}

Despite reductions in either direction, \MI and \AI are separable by constants in the reductions, with resilience against \AI easier to achieve than resilience against \MI.
The following theorems proved by \citet{Yeom:2020} relate \MI and \AI.

\begin{restatable}[{$\MI \reducesTo{1} \AI$~\cite[Theorem 6]{Yeom:2020}}]{thm}{MItoAI}
For any adversary $\A_\AI$ against attribute inference, there exists an adversary $\A_\MI$ against membership inference such that
\begin{equation*}
	\Adv_\MI(\A_\MI) = \Adv_\AI(\A_\AI)
\end{equation*}
\end{restatable}

\begin{restatable}[{$\AI \reducesTo{1/m} \MI$ \cite[Theorem 7]{Yeom:2020}}]{thm}{AItoMI}
Assume that for all $z \in \supp(\D)$, $\varphi(z)$ and $\pi(z)$ uniquely determine $z$.
For any adversary $\A_\MI$ against membership inference,
there exists an adversary $\A_\AI$ against attribute inference such that
\begin{equation*}
	\Adv_\AI(\A_\AI) = \frac{1}{m} \cdot \Adv_\MI(\A_\MI)
\end{equation*}
where $m$ is the number of possible values for the target attribute $\pi(z)$.
\end{restatable}

Resilience against \DPD implies resilience against all other attacks except \PI.
We present the necessary theorems below.
The remaining reductions ($\RC \reducesTo{} \AI, \DPD \reducesTo{} \AI$) are implied by the ones we show.

\citet[Theorem 3]{Balle:2022} show that training pipelines satisfying R{\'e}nyi DP (and thus $(\varepsilon,\delta)$-DP) enjoy resilience against reconstruction attacks.
In contrast, a bound on $\Adv_\DPD$ does not imply a nontrivial bound on $\varepsilon$ in $(\varepsilon,\delta)$.
In fact, $\Adv_\DPD \leq \delta$ is equivalent to $(0,\delta)$-DP.
Thus, we require an anti-concentration bound on the prior $\pi$ and that reconstruction succeeds with probability at least \nicefrac{1}{2} to reduce \DPD to \RC.

\begin{restatable}[{\hyperref[proof:DPDtoRC]{$\DPD \reducesTo{} \RC$}}]{thm}{DPDtoRC}
Let $\pi$ be a prior over samples, $S$ a dataset of $n-1$ samples, and $\ell$ a symmetric reconstruction loss satisfying the triangle inequality.
Let \A be an adversary against data reconstruction (\RC) \wrt $S$ and $\pi$ that reconstructs its challenge within error $\eta$ with probability $\gamma \geq \nicefrac{1}{2}$.
Let
\begin{equation*}
	\alpha = \inf_{z_0 \in \supp(\pi)} \Prob{z_1 \sim \pi}{\ell(z_0,z_1) > 2\eta}
\end{equation*}
There exists a DP distinguisher $\A_{\DPD \to \RC}$ such that
\begin{equation*}
	\Adv_\DPD(\A_{\DPD \to \RC}) \geq 2\alpha \left(\gamma - \frac{1}{2}\right)
\end{equation*}
\end{restatable}

DP distinguishability can be reduced to membership inference. This is an example of a generic class of reductions: In both games the adversary has the same goal and their advantage is identically defined, but in game \MI the adversary has strictly fewer capabilities than in $\DPD$.
Thus, any adversary against \MI can be turned into a valid adversary against $\DPD$ with the same advantage.
In general, a more informed/capable adversary, such as a DP distinguisher, can be used to build a reduction to games with a less informed/capable adversary.

\begin{restatable}[{\hyperref[proof:DPDtoMI]{$\DPD \reducesTo{} \MI$}}]{thm}{DPDtoMI}
For any adversary $\A_\MI$ against membership inference,
there exists a DP distinguisher $\A_\DPD$ such that
\begin{equation*}
	\Adv_\DPD(\A_\DPD) = \Adv_\MI(\A_\MI)
\end{equation*}
\end{restatable}

Finally, we show that a membership inference attack can be turned into a reconstruction attack, with a constant depending on the size of the support of the training data distribution.

\begin{restatable}[{\hyperref[proof:RCtoMI]{$\RC \reducesTo{1/|\supp(\D)|} \MI$}}]{thm}{RCtoMI}
For any membership inference adversary \A against $\MI(\Train,\D,n)$ there exists a reconstruction adversary \B against $\RC^{\textsf{Ran}}(\D,n,\D,\Train)$ (\ie, with prior $\pi = \D$) such that
\begin{equation*}
	\Adv_{\RC^\textsf{Ran}}(\B) = \frac{1}{|\supp(\D)|}\cdot \Adv_\MI(\A)
\end{equation*}
\end{restatable}

\subsection{Separation Results}
\label{sec:separation}

\begin{restatable}[{\hyperref[proof:MItoPI]{$\MI \nreducesTo \PI$}}]{thm}{MItoPI}
Resilience against membership inference does not imply resilience against property inference.
\end{restatable}

\begin{restatable}[{\hyperref[proof:MItoDPD]{$\MI \nreducesTo \DPD$}}]{thm}{MItoDPD}
Resilience against membership inference does not imply resilience against DP distinguishability.
\end{restatable}

\begin{restatable}[{\hyperref[proof:MItoRC]{$\MI \nreducesTo \RC$}}]{thm}{MItoRC}
Resilience against membership inference does not imply resilience against reconstruction.
\end{restatable}

This last counterintuitive separation result stems from a discrepancy between adversary capabilities: The \MI game is based on an average case scenario, while the reconstruction game assumes a more informed worst-case adversary.
By considering a membership adversary matching the capabilities of the adversary in the \RC game, we can build a reduction to data reconstruction.
We show this in \autoref{thmt@@SMItoRC} in the Appendix, which reduces the strong membership inference game \SMI (\autoref{game:DPD}) to game \RC.

\begin{restatable}[{\hyperref[proof:PItoMI]{$\PI \nreducesTo \MI$}}]{thm}{PItoMI}
Resilience against property inference does not imply resilience against membership inference.
\end{restatable}

\begin{restatable}[{\hyperref[proof:RCtoDPD]{$\RC \nreducesTo \DPD$}}]{thm}{RCtoDPD}
Resilience against reconstruction does not imply resilience against DP distinguishability.
\end{restatable}

\begin{restatable}[{\hyperref[proof:DPDtoPI]{$\DPD \nreducesTo \PI$}}]{thm}{DPDtoPI}
Resilience against DP distinguishability does not imply resilience against property inference.
\end{restatable}

\section{Case Study: Mixture Model Membership Inference}
\label{sec:case_study}
We present a case study where we showcase the expressive power and rigor of privacy games.
In particular, we show that a novel variant of membership inference can be decomposed into a combination of membership and property inference.
This complex relationship goes beyond the direct reductions presented in \autoref{sec:relations}.
In our proofs, we exploit \emph{code-based} reductions structured as a sequence of games; \ie, our arguments rely on transforming code with a formal semantics.

The game we target is due to \citet{Humphries:2021}, who use it to model membership inference attacks in the presence of dependencies in the training data.
In their game (\MM in \autoref{game:MM}), the training data follows a two-stage \emph{mixture model}.
Examples in the training dataset and the target example are chosen independently from two data distributions, $\D_k$ and $\D_{k'}$, which are chosen uniformly at random without replacement from $K$ possible distributions $\DD = \{\D_1, \dotsc, \D_K\}$.

\begin{game}
\caption[F]{\boxA{$\MM_{\phantom{}}$} \boxB{$G_0$}}
\label{game:MM}
  \KwIn{$\Train, \DD, n, \A$}
  $k  \sim [K]$\;
  $k' \sim [K] \setminus \{k\}$\;
  $S \sim \D_k^n$\;
  $\theta \gets \Train(S)$\;
  $b \sim \bit$\;
  \eIf{$b = 0$}{
    \boxA{$z \sim S_{\phantom{}}$} \boxB{$z \sim \D_k$}
  }
  {
    $z \sim \D_{k'}$
  }
  $\guess{b} \gets \A(\Train, \DD, n, \theta, z)$
\end{game}
\vspace{-15pt}
\begin{game}
\caption[F]{$G_1$}
\label{game:G1}
  \KwIn{$\Train, \DD, n, \A$}
  $k  \sim [K]$\;
  $k' \sim [K] \setminus \{k\}$\;
  $z \sim \D_k$\;
  $b \sim \bit$\;
  \eIf{$b = 0$}{
    $S \sim \D_k^n$
  }
  {
    $S \sim \D_{k'}^n$
  }
  $\theta \gets \Train(S)$\;
  $\guess{b} \gets \A(\Train, \DD, n, \theta, z)$
\end{game}

We show that \MM can be decomposed into a property inference goal (inferring the training data distribution) and a membership inference goal (inferring whether a target example has been sampled from the training data distribution $\D_k$ or from the training dataset $S$).

\begin{restatable}{thm}{relationMMandMIandPI}
For any adversary \A against \MM, there exist adversaries $\A_\MI^i$ and $\A_\PI^{i,j}$ such that
\begin{equation*}
  \Adv_{\MM}(\A) \leq
  \max_{i \in [K]} \Adv_{\MI_i}(\A_\MI^i) + \max_{i \neq j \in [K]} \Adv_{\PI_{i,j}}(\A_\PI^{i,j})
\end{equation*}
where $\MI_i$ is the membership inference game with training data distribution $\D_i$, and in $\PI_{i,j}$ the property to infer is whether the training data distribution is $\D_i$ or $\D_j$.
\end{restatable}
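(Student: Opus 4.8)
The plan is a three-step game-hopping argument that peels off the membership-inference component and then the property-inference component, using the auxiliary games $G_0$ (from \autoref{game:MM}) and $G_1$ (\autoref{game:G1}) as way stations: recall that $G_0$ replaces the challenge $z \sim S$ by $z \sim \D_k$ in the $b = 0$ branch of \MM, and that $G_1$ always takes $z \sim \D_k$ while using $b$ to decide whether $S$ is drawn from $\D_k$ or $\D_{k'}$. Concretely, I would prove the identities
\[
\Adv_\MM(\A) - \Adv_{G_0}(\A) = \mathbb{E}_{k \sim [K]}[\Adv_{\MI_k}(\A_\MI^k)], \quad \Adv_{G_0}(\A) = \Adv_{G_1}(\A), \quad \Adv_{G_1}(\A) = \mathbb{E}_{i \neq j \in [K]}[\Adv_{\PI_{i,j}}(\A_\PI^{i,j})]
\]
for the adversary families built below, add them, and bound each expectation by its maximum to obtain the theorem.

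\emph{First hop ($\MM \to G_0$).} Writing $\Adv_G(\A) = \Pr[\guess{b} = 0 \mid b = 0] - \Pr[\guess{b} = 0 \mid b = 1]$ in each game, I would observe that $\MM$ and $G_0$ are the same experiment up to and including the draw of $b$ and coincide on the $b = 1$ branch, so the difference of advantages comes entirely from the $b = 0$ branch. There, $\MM$ hands the adversary a model $\theta$ trained on $S \sim \D_k^n$ together with a member $z \sim S$, whereas $G_0$ hands it $\theta$ together with a fresh $z \sim \D_k$; conditioning on the index $k = i$ (which is independent of $b$), this is exactly the member/non-member alternative of $\MI_i$, the membership inference game over $\D_i$ (\autoref{game:MI_example} with $\D = \D_i$). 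So I would take $\A_\MI^i$ to be the adversary that, on input $(\Train, \D_i, n, \theta, z)$, runs $\A(\Train, \DD, n, \theta, z)$ with the fixed public $\DD$ hardcoded and returns its guess; then $\Adv_{\MI_i}(\A_\MI^i)$ equals the $i$-th conditional difference, which gives the first identity.

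\emph{Second and third hops.} For $G_0 = G_1$: since $\A$ observes only $(\theta, z)$ and $\theta \gets \Train(S)$ with independent randomness, it is enough that $(S, z, b)$ has the same law in both games. On $b = 0$ both yield $k \sim [K]$ followed by independent $S \sim \D_k^n$ and $z \sim \D_k$ (the idle index $k'$ marginalizes out harmlessly). On $b = 1$, $G_0$ yields $(S, z) \sim \D_k^n \times \D_{k'}$ and $G_1$ yields $(S, z) \sim \D_{k'}^n \times \D_k$, both with $(k, k')$ a uniformly random ordered pair of distinct indices, and the involution $(k, k') \mapsto (k', k)$ maps one law onto the other. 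For $G_1 \to \PI$: conditioned on $(k, k') = (i, j)$, game $G_1$ draws $S$ from $\D_i^n$ or $\D_j^n$ according to $b$, reveals $\theta$ plus an auxiliary draw $z \sim \D_i$, and asks for $b$; this is $\PI_{i,j}$ (\autoref{game:PI} with $\D_0 = \D_i$, $\D_1 = \D_j$) together with a challenge point the adversary can generate itself. I would let $\A_\PI^{i,j}$, on input $(\Train, \D_i, \D_j, n, \theta)$, draw $z \sim \D_i$, run $\A(\Train, \DD, n, \theta, z)$, and return its guess; then $\Adv_{\PI_{i,j}}(\A_\PI^{i,j})$ is the $(i, j)$-conditional contribution, and averaging over the uniform $(k, k')$ gives the third identity. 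Summing the three identities and replacing the two expectations with $\max_i$ and $\max_{i \neq j}$ respectively yields $\Adv_\MM(\A) \le \max_i \Adv_{\MI_i}(\A_\MI^i) + \max_{i \neq j} \Adv_{\PI_{i,j}}(\A_\PI^{i,j})$.

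\textbf{Expected main obstacle.} The conditional-probability bookkeeping and the well-formedness of the two reductions — the MI one forwards its arguments unchanged, the PI one must synthesize the extra challenge $z$, which it can because $\D_i$ is among its inputs — are routine. The step needing the most care is the equivalence $G_0 = G_1$: recognizing that two games which differ in whether the challenge or the training set is re-randomized are nonetheless identically distributed, and arguing this cleanly via the symmetry of the ordered pair $(k, k')$ while correctly handling the marginalization of the unused index in the $b = 0$ branch of $G_0$.
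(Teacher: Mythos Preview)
Your proposal is correct and follows essentially the same approach as the paper: the same intermediate games $G_0$ and $G_1$, the same black-box adversaries $\A_\MI^i$ (forwarding with $\DD$ hardcoded) and $\A_\PI^{i,j}$ (synthesizing $z \sim \D_i$), and the same final step of replacing the two averages over $k$ and $(k,k')$ by their maxima. Your symmetry argument for $G_0 \equiv G_1$ via the involution $(k,k') \mapsto (k',k)$ is in fact a cleaner phrasing of what the paper argues informally.
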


\begin{proof}
Let \A be an adversary against \MM.
Consider $G_0$ shown alongside \MM in \autoref{game:MM}.
Its only difference \wrt \MM is that when $b = 0$, the example $z$ is freshly sampled from the training data distribution $\D_k$ rather than from the training dataset $S$.
Conditioned on $b = 0, k = i$, distinguishing between games $G_0$ and \MM is as difficult as winning a membership inference game.
We show this using a black-box reduction: fixing $k = i$, we construct an adversary $\A_\MI^i$ that uses \A as an oracle to guess the challenge bit $b$ in game $\MI_i$ (see \autoref{game:MI_i}).
$\A_\MI^i$ simply forwards its inputs $\Train, n, \theta, z$ to \A, passing to it in addition the distribution set $\DD$.

\begin{game}
\caption[F]{$\MI_i$}
\label{game:MI_i}
  \KwIn{$\Train, \D_i, n, \A$}
    $S \sim \D_i^n$\;
    $\theta \gets \Train(S)$\;
    $b \sim \bit$\;
    \leIf{$b = 0$}{$z \sim S$}{$z \sim \D_i$}
    $\guess{b} \gets \A_\MI^i(\Train, \D_i, n, \theta, z)$
\end{game}
\vspace{-15pt}
\begin{adversary}
\caption[F]{$\A_\MI^i$}
  \KwIn{$\Train, \D_i, n, \theta, z$}
    \Return $\A(\Train, \DD, n, \theta, z)$
\end{adversary}

Game \MM conditioned on $b = 0, k = i$ is equivalent to $\MI_i$ conditioned on $b = 0$.
Likewise, game $G_0$ conditioned on $b = 0,k = i$ is equivalent to $\MI_i$ conditioned on $b = 1$. Hence,
\begin{align}
\Adv_{\MI_i}(\A_\MI^i) &=
\Prob{\MI_i}{\lnot\guess{b} \mid \lnot b} -
\Prob{\MI_i}{\lnot\guess{b} \mid b}
\nonumber \\
& \hspace{-1.6cm} =
\Prob{\MM}{\lnot\guess{b} \mid \lnot b, k = i} -
\Prob{G_0}{\lnot\guess{b} \mid \lnot b, k = i}
\label{eq:MM_G0}
\end{align}
Game \MM conditioned on $b = 1$ is equivalent to $G_0$ conditioned on $b = 1$, and so we have
\begin{align}
\Adv_{\MM}(\A) &=
\Prob{\MM}{\lnot\guess{b} \mid \lnot b} -
\Prob{\MM}{\lnot\guess{b} \mid b} \nonumber \\
& \hspace{-1.6cm} =\!
\frac{1}{K}\!
  \sum_{i=1}^K
  \Prob{\MM}{\lnot\guess{b} \mid \lnot b, k = i} - \Prob{\MM}{\lnot\guess{b} \mid b, k = i}
\nonumber \\
& \hspace{-1.6cm} =\!
\frac{1}{K}\!
  \sum_{i=1}^K
    \Adv_{\MI_i}(\A_\MI^i) \!+\!
    \Prob{G_0}{\lnot\guess{b} \!\mid\! \lnot b} \!\!-\!\!
    \Prob{G_0}{\lnot\guess{b} \!\mid\! b}
\label{eq:MMbound}
\end{align}
where the last equation follows from \eqref{eq:MM_G0} and the fact that $b$ and $k$ are independent.

We reformulate $G_0$ as $G_1$ (see \autoref{game:G1}). To see why both formulations are equivalent, note that conditioned on $b = 0$, in both games $S$ and $z$ are sampled from the same distribution chosen uniformly from $\DD$, while conditioned on $b = 1$, $S$ and $z$ are sampled each from one of two distributions sampled without replacement from $\DD$.
Since $b$ is independently sampled in the same way, both games result in the same joint distribution of $\theta,z,b$, and therefore $\guess{b},b$:
\begin{align}
\Prob{G_0}{\lnot\guess{b} \mid \lnot b} & =
\Prob{G_1}{\lnot\guess{b} \mid \lnot b} \\
\Prob{G_0}{\lnot\guess{b} \mid b} & =
\Prob{G_1}{\lnot\guess{b} \mid b}
\label{eq:G0_G1}
\end{align}
Next, we show using a black-box reduction that distinguishing between the case when $b = 0$ and $b = 1$ in $G_1$ conditioned on $k = i, k' = j$ is as hard as guessing the challenge bit in the property inference experiment $\PI_{i,j}$ shown in \autoref{game:PI_i_j}.
To do this, we construct an adversary $\A_\PI^{i,j}$ that uses \A as a black-box. $\A_\PI^{i,j}$ perfectly simulates the inputs to \A in $G_1$ by forwarding its own inputs and freshly sampling $z$ from $\D_i$.
\begin{align*}
\Adv_{\PI_{i,j}}(\A_\PI^{i,j}) &=
  \Prob{\PI_{i,j}}{\lnot\guess{b} \mid \lnot b} -
  \Prob{\PI_{i,j}}{\lnot\guess{b} \mid b} \nonumber \\
& \hspace{-1.9cm} =
  \Prob{G_1}{\lnot\guess{b} \!\mid\! \lnot b, k \!=\! i, k' \!=\! j} \!-\!
  \Prob{G_1}{\lnot\guess{b} \!\mid\! b, k \!=\! i, k' \!=\! j}
\label{eq:PI}
\end{align*}
Putting this and \eqref{eq:MMbound}--\eqref{eq:G0_G1} together we obtain
\begin{align*}
\Adv_{\MM}(\A) &= 
\begin{aligned}[t]
  & \frac{1}{K}
  \sum_{i=1}^K \Adv_{\MI_i}(\A_\MI^i) \\
  &+ \frac{1}{K(K - 1)} \sum_{i=1}^K \sum_{j=1,i\neq j}^K \Adv_{\PI_{i,j}}(\A_\PI^{i,j})
\end{aligned}\\
& \leq \max_{i \in [K]} \Adv_{\MI_i}(\A_\MI^i) + \max_{i \neq j \in [K]} \Adv_{\PI_{i,j}}(\A_\PI^{i,j})
\qedhere
\end{align*}

\begin{game}
\caption[F]{$\PI_{i,j}$}
\label{game:PI_i_j}
  \KwIn{$\Train, \D_i, \D_j, n, \A$}
  $b \sim \bit$\;
  \eIf{$b = 0$}{
    $S \sim \D_i^n$
  }
  {
    $S \sim \D_j^n$
  }
  $\theta \gets \Train(S)$\;
  $\guess{b} \gets \A_\PI^{i,j}(\Train, \D_i, \D_j, n, \theta)$
\end{game}
\vspace{-15pt}
\begin{adversary}
\caption[F]{$\A_\PI^{i,j}$}
  \KwIn{$\Train, \D_i, \D_j, n, \theta$}
  $z \sim \D_i$\;
  \Return $\A(\Train, \DD, n, \theta, z)$
\end{adversary}
\end{proof}

\section{Discussion}
\label{sec:discussion}

We discuss strategies for choosing privacy games, their current and future uses, and their limitations.

\subsection{Selecting Games}

With the variety of privacy games in the literature, it is natural to ask whether there is a \emph{canonical} game that should be used instead of others.
We believe this is not the case, \ie, no single game is the best choice in all circumstances because subtle differences in threat scenarios can lead to vastly different privacy evaluations (see, \eg,~\cite{Carlini:2021b}).
Instead, we recommend that users of games leverage the building blocks we provide in this paper to design games that accurately capture their application-specific threat models.
For a given threat model, however, some differences between modelling choices are less important (\eg, in MI, whether one samples nonmembers from the full distribution or excluding the training set), and we highlight this distinction throughout the paper.

\subsection{Current Uses of Privacy Games}

The use of privacy games has become prevalent in the literature on machine learning privacy.
As of today, there have been two main applications:
\begin{inparaenum}
\item supporting the \emph{empirical evaluation} of machine learning systems against a variety of threats, and
\item \emph{comparing} the strength of privacy properties and attacks.
\end{inparaenum}
Reductions enable a third application: translating \emph{provable guarantees} from one property to another.

Game-based definitions of inference risks are presented in often inconsistent form fragmented across the literature and only a few of the reductions and separation results in~\autoref{fig:relations_graph} were made explicit.
We present a common vocabulary for game-based definitions, formalize games for five fundamental inference risks, and establish connections between them.

\subsection{Prospective Uses of Privacy Games}

We highlight two other promising uses for games.

\boldparagraph{Communicating privacy properties}
Reasoning about ML privacy risks is not the exclusive purview of researchers.
Other personas, \eg, privacy managers and auditors, need to make decisions about the compliance of training pipelines with regulatory or contractual constraints.
Based on our experience, privacy managers currently base their reasoning on (1) empirical privacy evaluations, (2)
formal guarantees of mechanisms such as DP-SGD, and (3) informal texts such as the \emph{Opinion 05/2014}~\cite{article29} of the European Commission's Article 29 Working Party.
They are then faced with the daunting task of combining these pieces into a coherent picture to assess the privacy risks of specific applications.
Privacy games can help with this task: by making the threat model and assumptions about dataset creation and training explicit, they can disambiguate interpretations and can abstract an application scenario with respect to its (provable and empirical) privacy properties.
Indeed, based on our initial experience, games facilitate discussing privacy goals and guarantees with stakeholders making guidelines and decisions around ML privacy.

\boldparagraph{Mechanization of proofs}
An advantage of the game-based formalism is that games can be given an unambiguous semantics as probabilistic programs.
This enables reasoning about games using program logics and manipulating them using program transformations.
Reusable program transformations (\eg, procedure inlining) and proof techniques (\eg, conditioning on events) arise naturally and make proofs more amenable.
As we show in \autoref{sec:case_study}, our proofs exhibit some of these patterns.

We envisage techniques and frameworks to reason about game-based cryptographic proofs (\eg, EasyCrypt, FCF) being repurposed to reason about privacy games.
The apparent complexity of privacy games compared to cryptographic games is not an obstacle since most proofs manipulate training algorithms, models, and data as abstract objects with minimal structure.
For instance, we think it is possible to formalize the proof in \autoref{sec:case_study} in a tool like EasyCrypt.
The main challenge for mechanizing proofs about privacy games is that, unlike cryptographic games, privacy games sometimes require reasoning about continuous distributions (\eg, Gaussian noise in DP-SGD), but logics implemented in existing frameworks often assume a discrete probability space.

\subsection{Limitations of Privacy Games}

Privacy games are sequential probabilistic programs; they are not an immediate fit for expressing concurrent computations.
This prevents the direct application of games to important scenarios such as federated learning (FL). Intuitively, this is due to the hardness of modeling the various possible parallel interactions between the different parties.
The situation is similar for cryptographic games, where process calculi are used instead of games for modeling more complex multi-party interactions~\cite{Blanchet:2007,Meier:2013}.
It is an open question whether these calculi could also be used in the context of concurrent ML scenarios such as FL.

\section{Related Work}
\label{sec:related}
\boldparagraph{Alternatives}
We discuss below informal and formal alternatives to games to express privacy properties.

A key example of a \emph{formal} property is Differential Privacy~\cite{Dwork:2006}. The definition of Differential privacy is relational, in that it compares the probability of events in two alternative worlds. DP abstracts from many details that are relevant for threat modelling, such as adversary capabilities, goals, and background knowledge, as well as the way datasets are created. This has led to disagreements in the literature about the consequences of differential privacy (see~\cite{Tschantz:2020}).

A key example of an \emph{informal} account of privacy properties is the \emph{Opinion 05/2014 on Anonymization Techniques}~\cite{article29} that complements the EU General Data Protection Regulation (GDPR) with practical recommendations for the use of anonymization techniques to meet the requirements set out by the regulator.
In this influential document, the authors identify three privacy risks: \emph{singling out}, \emph{linkability}, and \emph{inference}. They analyze the suitability of different anonymization techniques---including $k$-anonymity and DP---for mitigating these risks, but the discussion remains inconclusive due to the lack of precise definitions. Subsequent research~\cite{Nissim:2020} rigorously revisited the notion of singling out and suggested reconsidering the Opinion recommendations.

Game-based definitions address shortcomings of both alternatives: They make the threat model and assumptions explicit and precise, which helps disambiguate interpretations.

\boldparagraph{Game-based privacy proofs}
\citet{Nissim:2018} construct a privacy game that reflects the requirements of the U.S.\xspace Family Educational Rights and Privacy Act (FERPA) for protecting privacy in releases of education records, and show in a proof structured as a sequence of games that DP is enough to satisfy these requirements.
While constructing the game, they identify dimensions similar to our anatomy in \autoref{sec:anatomy}.

\boldparagraph{Surveys and taxonomies on privacy}
Several papers propose taxonomies of privacy attacks against machine learning systems~\cite{Liu:2021c,Rigaki:2020,DeCristofaro:2021}.
\citet{Papernot:2018} focus on systematizing the possible attack surfaces of standard machine learning pipelines.
\citet{Desfontaines:2020} systematically study variants and extensions of differential privacy.
Before attacks against ML systems were demonstrated, \citet{Li:2013} proposed a unifying framework for membership and differential privacy definitions mainly applicable to database systems.

\section*{Acknowledgment}
\noindent
This work was partially funded by the U.S. National Science Foundation through the Center for Trustworthy Machine Learning (\#1804603).

\appendix

\subsection{Direct, Single-Query Membership Inference}
\label{sec:SQMI}
\citet{Tang:2022} present single-query variants of membership inference  where the adversary is given only the model output on the challenge point. 
In their base game (\autoref{game:SQMI}), the adversary selects a universe of $2n$ points from where $n$ points are sub-sampled to construct the training dataset of the target model.
The adversary goal is to infer whether a challenge $z_j$ uniformly sampled from the initial $2n$ points was used to train the model, \ie, guess $B[j]$, given just the model output on $z_j$.
They also consider variants where the set of $2n$ points is fixed externally, and a worst-case variant where the challenge $z_j$ is selected by the adversary.

\begin{game}[htpb]
\caption[F]{$\MI^{\textsf{SQ}}$}
\label{game:SQMI}
    \KwIn{$\Train, n, \A, \A^\prime$}
    $\{z_i\}_{i \in [2n]} \gets \A^\prime(\Train, n)$\;
    $B \sim \bit^{2n} \textrm{~s.t.~} \sum_{i \in [2n]} B[i] = n$\;
    $S \gets \{z_i\, |\, B[i] = 0\}_{i \in [2n]}$\;	
    $\theta \gets \Train(S)$\;
    $j \sim [2n]$\;
    $\guess{b} \gets \A(\Train, n, \{z_i\}_{i \in [2n]}, j, \theta(z_j))$
\end{game}

\subsection{Deferred Proofs}
\label{sec:proofs}
\begin{restatable}[$\SMI \reducesTo{} \RC$]{thm}{SMItoRC}
Let $z_0,z_1$ be two samples, $S$ a dataset of $n-1$ samples, and $\ell$ a symmetric reconstruction loss satisfying the triangle inequality.
Let $\A$ be an adversary against data reconstruction (\RC) \wrt $S$ and the uniform prior on $\{z_0,z_1\}$ that reconstructs its challenge with error $\eta < \ell(z_0,z_1)/2$ with probability $\gamma$.
Then, there exists a strong membership inference adversary $\A_{\SMI \to \RC}$ such that
\begin{equation*}
  \Adv_\SMI(\A_{\SMI \to \RC}) \geq 2\gamma - 1
\end{equation*}
\end{restatable}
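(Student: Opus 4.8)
The plan is to build the strong membership inference adversary $\A_{\SMI \to \RC}$ as a black-box, code-based wrapper around the reconstruction adversary $\A$. On input $(\Train, \theta, S, z_0, z_1)$, $\A_{\SMI \to \RC}$ runs $\guess{z} \gets \A(\Train, \theta, S)$ and then returns $\guess{b} = 0$ if $\ell(\guess{z}, z_0) \le \eta$, returns $\guess{b} = 1$ if instead $\ell(\guess{z}, z_1) \le \eta$, and returns a uniformly random bit otherwise.

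First I would establish that, inside game \SMI with parameters $\Train, S, z_0, z_1$, the adversary $\A$ is run on exactly the view it would see in game \RC with the same $\Train$, the same dataset $S$, and prior $\pi$ the uniform distribution on $\{z_0, z_1\}$. The key coupling identifies the reconstruction challenge $z$ of \RC with $z_b$ of \SMI: since $b \sim \bit$, the sample $z_b$ has exactly the law $\pi$; the model $\theta \gets \Train(S \cup \{z_b\})$ is produced identically in both games; and $\A$ receives the same triple $(\Train, \theta, S)$. Hence the joint distribution of $(z_b, \guess{z})$ under \SMI equals that of $(z, \guess{z})$ under \RC, so $\Prob{\SMI}{\ell(z_b, \guess{z}) \le \eta} = \Prob{\RC}{\ell(z, \guess{z}) \le \eta} \ge \gamma$.

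Next I would show that on the event $E = \{\ell(z_b, \guess{z}) \le \eta\}$ the reduction always recovers $b$. By symmetry of $\ell$ and the triangle inequality, $\ell(z_0, z_1) \le \ell(z_b, \guess{z}) + \ell(\guess{z}, z_{1-b})$, so on $E$ we get $\ell(z_{1-b}, \guess{z}) \ge \ell(z_0, z_1) - \eta > \ell(z_0, z_1)/2 > \eta$, using the hypothesis $\eta < \ell(z_0, z_1)/2$. Thus on $E$ the guess $\guess{z}$ is within $\eta$ of $z_b$ and strictly farther than $\eta$ from $z_{1-b}$, so exactly the $\guess{b} = b$ branch of the decision rule fires. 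Consequently $\Prob{\SMI}{\guess{b} = b} \ge \Prob{\SMI}{E} \ge \gamma$, and since $b$ is a uniform bit the advantage is $\Adv_\SMI(\A_{\SMI \to \RC}) = 2\,\Prob{\SMI}{\guess{b} = b} - 1 \ge 2\gamma - 1$, as claimed.

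The argument is elementary, so the one step I expect to need genuine care is the distributional-equivalence claim: one must check that $\A$'s simulated view is reproduced verbatim and that the coupling $z = z_b$ is legitimate, which is precisely the sort of code-level bookkeeping underlying the case study in \autoref{sec:case_study}. A minor point to dispatch along the way is that the two ``$\le \eta$'' branches of the decision rule are mutually exclusive whenever $E$ holds (again by the triangle-inequality estimate), and that the behaviour of the ``otherwise'' branch is immaterial to the stated bound since it only ever executes on $\lnot E$.
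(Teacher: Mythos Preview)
Your proposal is correct and follows essentially the same approach as the paper: wrap the reconstruction adversary $\A$ in a black-box decoder, use the triangle inequality together with $\eta < \ell(z_0,z_1)/2$ to show that a successful reconstruction of $z_b$ cannot land within $\eta$ of $z_{1-b}$, and conclude $\Pr[\guess{b}=b]\ge\gamma$. The only cosmetic difference is the decision rule---the paper uses a nearest-neighbor comparison (return $0$ iff $\ell(z_0,\guess{z}) < \ell(z_1,\guess{z})$) rather than your threshold test against $\eta$---but the analysis and the resulting bound are identical.
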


\begin{proof}
\begin{adversary}[htpb]
\caption[F]{$\A_{\SMI \to \RC}$}
\label{adv:SMI_to_RC}
  \KwIn{$\Train, \theta, S, z_0, z_1$}
  $\guess{z} \gets \A(\Train, \theta, S)$\;
  \eIf{$\ell(z_0, \guess{z}) < \ell(z_1, \guess{z})$}{
    \Return $0$
  }
  {
    \Return $1$
  }
\end{adversary}

Define $\A_{\SMI \to \RC}$ as in Adversary~\ref{adv:SMI_to_RC}.
For any $\guess{z}$, we have from the triangle inequality,
\begin{align*}
\ell(z_0, \guess{z}) &< \ell(z_0, z_1) / 2 < (\ell(z_0, \guess{z}) + \ell(\guess{z}, z_1)) / 2 \implies \\
\ell(z_0, \guess{z}) &< \ell(z_1, \guess{z})
\end{align*}
Therefore, when $b = 0$ in \SMI and \A succeeds in reconstructing $z_0$ within error $\eta$, $\A_{\SMI \to \RC}$ guesses correctly.
Similarly, when $b = 1$ and \A succeeds in reconstructing $z_1$ within error $\eta$, $\A_{\SMI \to \RC}$ guesses correctly.
Thus, $\A_{\SMI \to \RC}$ guesses $b$ correctly at least with probability $\gamma$ and
\begin{equation*}
\Adv_\SMI(\A_{\SMI \to \RC}) =
  2\!\Prob{\SMI(\cdots)}{\guess{b} = b} \!-\! 1
  \geq 2\gamma \!- 1 \qedhere
\end{equation*}
\end{proof}

\DPDtoRC*

\begin{proof}
\label{proof:DPDtoRC}
Observe that $1 - \alpha$ is the baseline success of a reconstruction adversary with error $2\eta$ (see~\autoref{eq:RC_baseline}).

Define $\A'_{\DPD \to \RC}$ as in Adversary~\ref{adv:DPD_to_RC_choose} and $\A_{\DPD \to \RC}$ as in Adversary~\ref{adv:DPD_to_RC}.

\begin{adversary}[htpb]
\caption[F]{$\A'_{\DPD \to \RC}$}
\label{adv:DPD_to_RC_choose}
  \KwIn{$\Train, n$}
  $z_0,z_1 \sim \pi$\;
  \Return $S, z_0, z_1$
\end{adversary}
\vspace{-10pt}
\begin{adversary}[htpb]
\caption[F]{$\A_{\DPD \to \RC}$}
\label{adv:DPD_to_RC}
  \KwIn{$\Train, \theta, S, z_0, z_1$}
  \eIf{$\ell(z_0,z_1) \leq 2\eta$}{
    $\guess{b} \sim \bit$\;
  }
  {
    $\guess{b} \gets \A_{\SMI \to \RC}(\Train, \theta, S, z_0, z_1)$
  }
  \Return $\guess{b}$
\end{adversary}

In the \DPD game, when $\ell(z_0,z_1) > 2\eta$, which occurs with probability at least $\alpha$, a similar analysis as in \autoref{thmt@@SMItoRC} shows that $\A_{\DPD \to \RC}$ guesses $b$ correctly whenever $\A$ succeeds in reconstructing its challenge within error $\eta$. Otherwise, the adversary guesses with probability $1/2$. Thus,
$$
\begin{array}{@{}l@{~}l}
\Prob{\DPD}{\guess{b} = b} &\geq \Prob{\DPD}{\guess{b} = b | \ell(z_0,z_1) > 2\eta} \alpha\ + \\
\quad
      & \quad \Prob{\DPD}{\guess{b} = b | \ell(z_0,z_1) \leq 2\eta} (1-\alpha) \\
& = \gamma \alpha + \frac{1}{2} (1 - \alpha)
\end{array}
$$
The DPD advantage of $\A_{\DPD \to \RC}$ is
\begin{align*}
  \Adv_{\DPD}(\A_{\DPD \to \RC}) &= 2\Prob{\DPD}{\guess{b} = b} - 1 \\
  &\geq 2\alpha \left(\gamma - \frac{1}{2}\right) \; \qedhere
\end{align*}
\end{proof}

\DPDtoMI*

\begin{proof}
\label{proof:DPDtoMI}

Let \A be an adversary against $\MI(\Train,\D,n)$.
We construct an adversary against $\DPD(\Train,n)$ as in Adversary~\ref{adv:DPD_to_MI_choose} and \ref{adv:DPD_to_MI}.
These adversary procedures, when inlined in $\DPD(\Train,n)$ (\autoref{game:DPD}), result in an experiment semantically equivalent to $\MI(\Train,\D,n,\A)$ (\autoref{game:MI}).
Thus,
\begin{equation*}
  \Adv_\DPD(\A_{\DPD \to \MI}) = \Adv_\MI(\A) \qedhere
\end{equation*}

\begin{adversary}[htpb]
\caption[F]{$\A'_{\DPD \to \MI}$}
\label{adv:DPD_to_MI_choose}
  \KwIn{$\Train, n$}
  $S \sim \D^{n-1}$\;
  $z_0, z_1 \sim \D$\;
  \Return $S, z_0, z_1$
\end{adversary}
\vspace{-10pt}
\begin{adversary}[htpb]
\caption[F]{$\A_{\DPD \to \MI}$}
\label{adv:DPD_to_MI}
  \KwIn{$\Train, \theta, S, z_0, z_1$}
  $\guess{b} \gets \A(\Train, \D, n, \theta, z_0)$\;
  \Return $\guess{b}$
\end{adversary}
\end{proof}

\RCtoMI*

\begin{proof}
\label{proof:RCtoMI}
Consider \autoref{game:AI_informed}, which is equivalent to \AI except the adversary is also given $S$.

\begin{game}
\caption[F]{$\AI'$}
\label{game:AI_informed}
  \KwIn{$\Train, \D, n, \B, \varphi, \pi$}
  $S \sim \D^{n-1}$\;
  $z_0, z_1 \sim \D$\;
  $b \sim \bit$\;
  $\theta \gets \Train(S \cup \{z_b\})$\;
  $\guess{z} \gets \B(\Train, \D, n, \theta, S, \varphi(z_0))$
\end{game}

The reconstruction advantage of \B coincides with its advantage
in $\AI'$ in the special case where $\varphi(z) = \bot$ and $\pi(z) = z$, \ie, the adversary has to reconstruct all attributes.
This is because $\varphi(z_0) = \bot$ and thus the guess $\guess{z}$ is independent of $z_0$ conditioned on $b = 1$.
\begin{equation*}
  \Adv_{\RC^\textsf{Ran}}(\B) =
    \Prob{\AI'}{\guess{z} = z_0 | b = 0} -
    \Prob{\AI'}{\guess{z} = z_0 | b = 1}
\end{equation*}

The rest of the proof is similar to the proof of \autoref{thmt@@AItoMI}, but we present it for the sake of completeness.

Let \A be an adversary against $\MI(\Train,\D,n)$.
We construct an adversary \B against $\RC^\textsf{Ran}(\D,n,\D,\Train)$, shown in \autoref{adv:RC_to_MI}, which uses \A to reconstruct its challenge.

\begin{adversary}
\caption[F]{\B}
\label{adv:RC_to_MI}
  \KwIn{$\Train, \theta, S$}
  $z' \sim \supp(\D)$\;
  $\guess{b} \gets \A(\Train, \D, n, \theta)$\;
  \eIf{$\guess{b} = 0$}{
    \Return $z'$
  }
  {
    \Return $\bot$
  }
\end{adversary}

Denote $\D(z_i)$ the quantity $\Prob{z \sim \D}{z = z_i}$, \ie,
the probability mass of \D at $z_i$ and let $m = |\supp(\D)|$.
In the following, we use \RC to denote the game $\RC^\textsf{Ran}(\D,n,\D,\Train,\B)$ and \MI to denote $\MI(\Train,\D,n,\A)$.

Since \B guesses $\guess{z} = z$ if and only if $z' = z$ and $\guess{b} = 0$, for any $z_i \in \supp(\D)$ we have for $\hat{b} \in \bit$

\begin{equation}
\!\!
\Prob{\AI'}{\guess{z} \!=\! z | b \!=\! \hat{b}, z \!=\! z_i} \!=\!
\frac{1}{m}\! \Prob{\AI'}{\guess{b} \!=\! 0 | b \!=\! \hat{b}, z \!=\! z_i}
\end{equation}

Hence, the advantage of $\B$ is

\begin{equation*}
\begin{split}
& \Adv_{\RC^\textsf{Ran}}(\B) \\
& =\!\!\!\! \sum_{z_i \in \supp(\D)}
  \!\!\!\!\D(z_i)
    \left( \Prob{\AI'}{\guess{z} = z_0 | b = 0, z = z_i} \right. \\
&\qquad\qquad\qquad\quad \left. - \Prob{\AI'}{\guess{z} = z_0 | b = 1, z = z_i} \right) \\
& = \frac{1}{m} \!\!\!\!
  \sum_{z_i \in \supp(\D)}
  \!\!\!\!\D(z_i)
    \left( \Prob{\AI'}{\guess{b} = 0 | b = 0, z = z_i} \right. \\
&\qquad\qquad\qquad\qquad \left. - \Prob{\AI'}{\guess{b} = 0 | b = 1, z = z_i} \right) \\
& = \frac{1}{m}
    \left( \Prob{\AI'}{\guess{b} = 0 | b = 0} -
           \Prob{\AI'}{\guess{b} = 0 | b = 1} \right) \\
& = \frac{1}{m} \Adv_\MI(\A)
\end{split}
\end{equation*}

The penultimate equality holds because $b$ and $z$ are independent.
The last equality holds because game $\AI'(\Train,\D,n,\B)$ matches game $\MI(\Train,\D,n,\A)$ and so the joint distribution of $\guess{b},b$ is the identical in both games. \qedhere

\end{proof}

\MItoDPD*

\begin{proof}
\label{proof:MItoDPD}
We show that there are training pipelines that are arbitrarily resilient against membership inference attacks but completely insecure against DP distinguishing attacks.

We construct a training pipeline $(\Train, \D, n)$ such that the MI advantage of an adversary against it is at most $\nicefrac{1}{\sqrt{n}}$, and so vanishes as $n$ grows.
Yet, we exhibit a DP distinguisher against the pipeline that achieves perfect advantage.

\begin{game}
\caption{$\MI'$}
\label{game:MI_informed}
    \KwIn{$\Train, \D, n, \A$}
    $b \sim \bit$\;
    $S \sim \D^{n-1}$\;
    $z_0, z_1 \sim \D$\;
    $\theta \gets \Train(S \cup \{z_b\})$\;
    $\guess{b} \gets \A(\Train, \D, n, \theta, z_0, z_1)$
\end{game}

Let $\D = \mathrm{Bernoulli}(p)$ and $\Train(S) = \sum_{x \in S} x$.
Consider \autoref{game:MI_informed}.
If the adversary were only given $z_0$, this game would be equivalent to the basic \MI game (\autoref{game:MI_example}).
Since the adversary is given strictly more information, any bound on its advantage in this game would also bound the MI advantage of adversaries against the training pipeline.
The adversary must distinguish between two simple hypotheses:
\begin{itemize}
\item $H_0: \theta \sim \mathrm{Binomial}(n-1, p) + z_0$
\item $H_1: \theta \sim \mathrm{Binomial}(n-1, p) + z_1$
\end{itemize}
When $z_0 = z_1$, these coincide and the advantage of the adversary is 0.
Otherwise, without loss of generality, assume $z_b = b$.
By the Neyman-Pearson lemma, a likelihood ratio test yields the most powerful test for a significance $\alpha$ (\ie, Type-I error, false positive rate).
Let $f$ and $F$ be the probability mass and cumulative distribution function of $\mathrm{Binomial}(n-1, p)$, respectively.
The likelihood ratio is
\begin{equation*}
\Lambda(\theta = k) =
  \begin{cases}
    \infty & \text{if } k = 0 \\
    0      & \text{if } k = n \\
    \frac{f(k)}{f(k-1)} = \frac{(n - k) p}{k (1-p)} & \text{otherwise}
  \end{cases}
\end{equation*}
The test rejects $H_0$ when $\Lambda(\theta) < c$, for some $c$.
The false positive rate $\alpha$ (the probability of rejecting $H_0$ when $H_0$ is true) is
\begin{align*}
\Pr_{H_0} (\Lambda(\theta) < c)
    &= \Pr_{H_0}\left( \frac{(n - k) p}{k (1 - p)} < c \right)  \\
    &= \Pr_{H_0} \left( k > \frac{n p}{p + c (1-p)} \right) \\
    &= 1 - F \left( \frac{n p}{p + c (1-p)} \right)
\end{align*}
The false negative rate $\beta$ is
\begin{align*}
\Pr_{H_1} (\Lambda(\theta) \geq c)
    &= \Pr_{H_1}\left( \frac{(n - k) p}{k (1 - p)} \geq c \right)  \\
    &= \Pr_{H_1} \left( k \leq \frac{n p}{p + c (1-p)} - 1 \right) \\
    &= F \left( \frac{n p}{p + c (1-p)} - 1 \right)
\end{align*}
Now, take $p = 0.5$ and assume that $n \geq 4$ and that $n$ is even so that the mode of $\mathrm{Binomial}(n-1, p)$ is $n/2$.
The MI advantage of the adversary is
\begin{align*}
\Adv_\MI(\A)
    &= \frac{1}{2} ( f(0) + f(n-1) + (1 - \alpha - \beta) ) \\
    &= f(0) + \frac{1}{2} f\left( \frac{n p}{p + c (1-p)} \right) \\
    &\leq \frac{1}{2^{n-1}} + \frac{f(n/2)}{2} \\
    &\leq \frac{1}{2\sqrt{n}} + \frac{1}{2\sqrt{n}} = \frac{1}{\sqrt{n}}
\end{align*}

On the other hand, a DP distinguisher \A that chooses $z_0 = 0, z_1 = 1$, an arbitrary $S$, and that guesses $\guess{b} = \theta - \sum_{x\in S} S$, has perfect advantage $\Adv_\DPD(\A) = 1$.
\end{proof}

\balance

\MItoPI*
\label{proof:MItoPI}

\begin{proof}
We construct a training pipeline $(\Train, \D_b, n)$ that is arbitrarily resilient to membership inference for $b \in \bit$.
Yet, we exhibit a property inference attack against it that achieves perfect advantage.

Let $\D_b = \mathrm{Bernoulli}(p_b)$ with $p_0 \neq p_1$ and $\Train(S) = \sum_{x \in S}x$.
As shown in \autoref{thmt@@MItoDPD}, the advantage of a membership inference adversary against $(\Train, \D_b, n)$ is at most $\nicefrac{1}{\sqrt{n}}$.
However, as $n$ grows, $\Train(S)/n$ is an unbiased estimator for the mean $p_b$, which allows a property inference adversary to easily distinguish between $\D_0$ and $\D_1$, particularly when $p_0$ and $p_1$ are far apart.
\end{proof}

\MItoRC*

\begin{proof}
\label{proof:MItoRC}
It suffices to show that the training pipeline from \autoref{thmt@@MItoDPD}, which is resilient to membership inference attacks, admits a reconstruction attack.
For this, recall that in \RC the adversary knows the dataset $S$ (but not the target sample $z$).
For the pipeline $(\Train, \D, n)$ given in \autoref{thmt@@MItoDPD}, $z$ can be perfectly reconstructed since $z = \theta - \sum_{x\in S} x$.
\end{proof}

\PItoMI*

\begin{proof}
\label{proof:PItoMI}
We exhibit a pipeline resilient to property inference that is completely vulnerable to a membership inference attack.

Let \D be an arbitrary distribution and define $\D_b$ so that
$z \sim \D_b \equiv x \sim \D; z \gets (x, b)$.
Let $n > 0$ and define
\begin{equation*}
  \Train(S) = \{ x \in S | (x,y) \in S \}
\end{equation*}
A membership inference adversary against $\MI(\Train, \D, n)$ that given $\theta, z_0 = (x,y)$ returns 0 if and only if $x \in S$ achieves the maximum advantage, \ie,
\begin{equation*}
  1 - \Prob{S \sim \D^n; x \sim \D}{x \in S}
\end{equation*}
However, a property inference adversary gets no information about $b$ as $\theta$ and $b$ are independent, so its advantage is 0.
\end{proof}

\RCtoDPD*

\begin{proof}
\label{proof:RCtoDPD}
\citet[Theorem 5]{Balle:2022} show that resilience against reconstruction \wrt all priors in a family of distributions concentrated on all ordered pairs of distinct examples implies $(\varepsilon,\delta)$-DP, and hence via \autoref{prop:humphries} resilience against \DPD.

However, resilience against a single prior $\pi$, even if its support includes all possible examples, is clearly insufficient to guarantee resilience against \DPD.
To see why, consider a deterministic \DPD adversary that picks $S, z_0, z_1$. Given error bound $\eta$ and success probability $\gamma$, all reconstruction adversaries can have error larger than $\eta$ when $z \notin \{z_0,z_1\}$ but reconstruct $z \in \{z_0,z_1\}$ perfectly, as long as $\Prob{z \sim \pi}{z \in \{z_0,z_1\}} < \gamma$, \ie, the probability mass of the prior on $\{z_0,z_1\}$.
The situation is worse when $z_0,z_1 \notin \supp(\pi)$, where resilience against reconstruction for arbitrary $\eta,\gamma$ is compatible with perfect \DPD advantage.
\end{proof}

\DPDtoPI*

\begin{proof}
\label{proof:DPDtoPI}
Let $\varepsilon, \delta \in (0,1)$.
We build two training pipelines $(\Train, \D_b, n)$, $b \in \bit$, that satisfy ($\varepsilon, \delta$)-DP and thus
are resilient against \DPD (see \autoref{prop:humphries}).
We then show an adversary against $\PI(\D_0, \D_1, n, \Train)$ whose advantage grows with $n$.

Let $\D_b = \mathrm{Bernoulli}(p_b)$ with $p_0 \neq p_1$ and
define $\Train(S) = \sum_{x \in S} x + \mathcal{N}(0, \sigma^2)$ where
$
  \sigma^2 = 2\ln(1.25/\delta) \varepsilon^{-1} .
$
Since the sum above has sensitivity 1 and \Train is the standard Gaussian mechanism, the training pipeline is ($\varepsilon, \delta$)-DP.

Note that for $S$ sampled from $\D_b$, the random variable $\Train(S)$ is distributed as
$
  \mathrm{Binomial}(n,p_b) + \mathcal{N}(0, \sigma^2) .
$
We can use Berry-Esséen theorem to approximate the binomial distribution with a normal distribution, so that approximately
\begin{align*}
  \Train(S) &\sim \mathcal{N}(n p_b, n p_b (1-p_b)) + \mathcal{N}(0, \sigma^2) \\
            &\sim \mathcal{N}(n p_b, n p_b (1-p_b) + \sigma^2)
\end{align*}
The approximation error is $O(\sqrt{n})$.
$\Train(S)/n$ is an unbiased estimator for $p_b$ with variance $p_b (1-p_b) + \nicefrac{\sigma^2}{n}$.
Since $\sigma$ does not depend on $n$, as $n$ grows the approximation error and the variance of the estimate decrease.
This allows a property inference adversary to distinguish between $\D_0$ and $\D_1$, particularly when $p_0$ and $p_1$ are far apart.
\end{proof}

\citet[Section 4.2]{Ateniese:2015} were the first to observe that differential privacy does not protect against property inference and provided a practical counterexample: a differentially private $k$-means network traffic classifier that nonetheless leaks the presence of traces from Google.com web traffic in their training dataset.
However, their argument remains informal, appealing to visual differences in the centroids of just two trained models.
\citet[Section V]{Suri:2023} give a more compelling example where property inference risk remains high on neural networks trained with DP-SGD.

\end{document}